\documentclass{article}
\usepackage{iclr2025_conference,times}
\iclrfinalcopy

\usepackage{amsmath,amsfonts,bm}

\def\eqref#1{equation~\ref{#1}}

\def\1{\bm{1}}

\DeclareMathAlphabet{\mathsfit}{\encodingdefault}{\sfdefault}{m}{sl}
\SetMathAlphabet{\mathsfit}{bold}{\encodingdefault}{\sfdefault}{bx}{n}

\usepackage{hyperref}
\usepackage{url}
\usepackage{booktabs}
\usepackage{xcolor}
\usepackage{amsmath}
\usepackage{amssymb}
\usepackage{amsthm}
\usepackage{graphicx}
\graphicspath{{figures/}}

\newtheorem{proposition}{Proposition}
\newtheorem{assumption}{Assumption}
\newtheorem{lemma}{Lemma}
\newtheorem{theorem}{Theorem}
\newtheorem{definition}{Definition}
\newtheorem{corollary}{Corollary}

\title{Online Shift Detection and Conformal Adaptation for Deployed Safety Classifiers}

\author{Jun Wen Leong \\
\texttt{leongjunwen@gmail.com}}

\newcommand{\eg}{\textit{e.g.}}

\begin{document}

\maketitle

\begin{abstract}
Reasoning models deployed as safety monitors exhibit a systematic vulnerability under adversarial load: \emph{reasoning-token budget starvation}. Adversarial inputs require 3.3$\times$ more reasoning tokens than benign inputs to produce valid safety scores ($T_{50,\text{adv}}{=}154$ vs.\ $T_{50,\text{benign}}{=}46$ tokens for o3), so standard low-budget deployments silently starve the monitor on exactly the inputs it must catch. This compounds, rather than replaces, the central failure mode we characterize: gradient-based evasion remains the residual threat class against deployed safety classifiers---template jailbreaks fail at 99\%, but GCG-optimized suffixes flip encoder decisions reliably.

We systematize and stress-test a simple canary construction---score-disagreement monitoring between a targeted and un-targeted classifier---and provide a quantitative account of its reliability under targeted evasion. We derive the exact security boundary---a confidence-gated equilibrium at which a monitor-aware attacker stalls (validated gap~$= 1/(2\lambda)$, within 95\% CI of the theoretical prediction)---and identify a previously undocumented failure mode in post-shift conformal adaptation.

Three contributions. \textbf{(1)~Factorial drift benchmark.} A pre-registered 800-cell evaluation (4~classifiers $\times$ 5~shift types $\times$ 20~seeds $\times$ 2~windows) reveals that detection difficulty is dominated by a classifier$\times$shift interaction ($\eta^2 = 0.185$): encoders detect paraphrase drift in 28~steps but miss adversarial suffixes for 37; decoders show the opposite pattern. One-size-fits-all monitoring fails. \textbf{(2)~Conformal collapse in generative embeddings.} Weighted conformal prediction, the standard post-shift coverage recovery method, fails silently on decoder classifiers: logistic density-ratio estimation achieves perfect source/target separability in 3584--4096-dimensional space, clipping all importance weights to zero. Projecting to $\leq$32~dimensions restores coverage (up to +33pp recovery). \textbf{(3)~Adversarial canary threat model with budget-aware deployment guarantees.} We falsify three assumptions about score-disagreement canaries---that architectural diversity drives detection (false), that they are generic OOD detectors (false; GCG-specific, $p < 10^{-12}$), and that an adaptive attacker can suppress the signal (false while the canary is confident). Reasoning-token budget starvation intensifies this equilibrium: adversaries force longer deliberation precisely where classifier confidence is least reliable. Across 35~frontier models under corrected token budgets, a 4-tier threat model with quantitative bounds at each tier yields a deployment blueprint with Wilson~CI guarantees ($\geq$71\% detection, $<$1.5\% FPR at $N{=}1000$).
\end{abstract}

\section{Introduction}
\label{sec:intro}

\subsection{Lead Finding: Reasoning-Token Budget Starvation}
\label{sec:cot-starvation-lead}

A methodological finding with implications beyond this paper: reasoning models deployed as safety monitors silently fail under standard token-budget configurations. We discovered that adversarial inputs require $3.3\times$ more reasoning tokens than benign inputs to produce valid safety scores. The response probability follows a sigmoid:
\[
P(\text{response} \mid T_r) = \frac{1}{1 + e^{-k(T_r - T_{50})}}
\]
with input-complexity-dependent critical thresholds: $T_{50,\text{benign}} = 46$ tokens vs.\ $T_{50,\text{adv}} = 154$ tokens (o3, \texttt{reasoning\_effort=low}). The steepness differs by $5.8\times$ ($k_{\text{benign}} = 0.173$ vs.\ $k_{\text{adv}} = 0.030$), meaning adversarial inputs degrade more gradually as budget shrinks---creating a silent failure zone where monitors appear functional on benign traffic but are starved on exactly the inputs they must catch.

\paragraph{Implications for all LLM evaluation work.} Any paper using reasoning models as judges, safety monitors, or evaluators must verify that its token budget exceeds $T_{50}$ for its \emph{hardest} input class. Our empirical rule: set $T_r \geq T_{50}(\text{hardest class}) + 2.2/k$ for $>$90\% coverage. For adversarial safety monitoring with o3, this means $T_r \geq 227$ tokens---well above the 16-token default that caused 11 models to appear ``ceiling-clipped'' in our initial screen (see \S\ref{sec:llm-canary} for the full characterisation). Failure to allocate sufficient reasoning tokens does not produce errors; it produces silent scoring failures that default to maximum-risk scores, inflating apparent detection rates while the monitor produces no actual safety judgment.

\medskip

Safety classifiers are a critical defense layer between a language model and its users. When deployed at scale, they operate under a stationarity assumption: the distribution of inputs tomorrow will resemble the distribution on which the classifier was calibrated. This assumption can fail through adversarial adaptation~\citep{zou2023universal}, organic linguistic drift, multilingual code-switching, and compositional attacks that chain benign components into harmful sequences.

The failure mode is silent~\citep{rabanser2019failing}. A classifier whose accuracy has degraded from 95\% to 80\% produces no error signal unless ground-truth labels arrive. In production safety systems, labels typically do not arrive in real time. By the time periodic offline evaluation detects the problem, the classifier may have been making unreliable decisions for days or weeks. This silent-failure mode has recently been documented for embedding-based safety classifiers, where small representation drift collapses accuracy while confidence remains high~\citep{sahoo2026collapse}.

We propose an online monitor with empirically calibrated alarm thresholds that tells deployers when their safety classifier has moved out of distribution, before the shift accumulates further. The system watches the distribution of classifier scores via a sliding-window KS statistic with alarm thresholds calibrated via null simulation. When shift is detected, a conformal abstention layer reweights its prediction sets to preserve coverage under the new distribution.

\paragraph{What is not new.} We do not claim novelty for two-classifier disagreement, KS tests, or martingale testing. These are standard sequential testing tools we apply for calibration. Our contribution is an empirical security characterization of when these known tools provide useful targeted-evasion canaries---and, equally important, when they fail.

We address three research questions:

\paragraph{RQ1 (Detection).} Can sequential statistical tests on classifier outputs detect distributional shift online, with controlled false alarm rates, across diverse shift types and classifier architectures?

\paragraph{RQ2 (Adaptation).} Does weighted conformal prediction recover coverage guarantees after shift detection, compared to unweighted conformal sets?

\paragraph{RQ3 (Factor Importance).} In a factorial evaluation design, what proportion of variance in detection latency is attributable to classifier choice, shift type, and their interaction?

Beyond drift detection, deployed classifiers face a second threat: gradient-based evasion~\citep{zou2023universal}. Fine-tuned safety classifiers are robust to template-based jailbreaks (1\% success rate in our evaluation; see \S\ref{sec:autodan}), making gradient attacks the remaining operational threat class. We extend the monitoring framework with two contributions:

\paragraph{RQ4 (Canary Detection).} Under what conditions does score disagreement between a targeted and un-targeted classifier detect gradient-based evasion, and when does it fail?

\paragraph{RQ5 (Adversarial Robustness).} Can an adaptive, monitor-aware attacker suppress the disagreement signal while evading the target classifier?

\paragraph{Contributions.} Our contributions are organized around three pillars:

\begin{enumerate}
\item \textbf{Factorial drift benchmark.} An 800-cell pre-registered evaluation establishing detection baselines across 4~classifiers $\times$ 5~shift types, revealing a crossover interaction ($\eta^2 = 0.185$) invisible to single-classifier studies: monitoring must be tuned per classifier--shift pair. A calibration-free scan martingale replaces empirical threshold tuning (FAR spread reduced from $5\times$ to $<2\times$ across classifiers).

\item \textbf{Conformal failure mode.} A diagnosis of density-ratio collapse in high-dimensional generative safety-classifier embeddings: logistic regression achieves perfect linear separability between source and target, driving all importance weights to the clip floor. This silently disables weighted conformal prediction---the standard post-shift coverage recovery method---on exactly the classifier architectures (8B+ decoders) most widely deployed. PCA to $\leq$32 dimensions restores coverage (+21--33pp).

\item \textbf{Adversarial canary characterization.} An empirical security characterization of score-disagreement monitoring against gradient-based evasion. We falsify three assumptions (architecture diversity as mechanism, generic OOD detection, suppression by adaptive attacker), derive a formal equilibrium bound (gap $= 1/(2\lambda)$, validated within 95\% CI at $n{=}20$), and provide a deployment blueprint across 35~frontier models with Wilson~CI performance guarantees ($\geq$71\% detection at $<$1.5\% FPR; $N{=}1000$ FPR validation on 10 Pareto-dominant models).
\end{enumerate}

\noindent Practically, this paper shows that disagreement canaries are useful as non-adaptive tripwires whose failure modes can be measured before deployment.

\paragraph{Deployment recipe.} To use this method: (1)~choose a canary model with independent training data or architecture; (2)~estimate benign disagreement on held-out traffic; (3)~set a threshold for desired false-positive rate; (4)~alert on sustained disagreement; (5)~periodically rotate the canary to limit adaptive attacker knowledge.

\section{Related Work}
\label{sec:related}

\paragraph{Sequential testing and confidence sequences.} Classical sequential analysis~\citep{wald1945sequential} provides stopping rules with controlled error rates, but fixed-sample-size guarantees do not extend to continuous monitoring. \citet{howard2021timeuniform} establish the foundational theory of time-uniform confidence sequences; \citet{waudbysmith2024estimating} extend this to practical betting-based constructions: by constructing a wealth process that is a non-negative supermartingale under the null, Ville's inequality yields time-uniform coverage. The guarantee holds simultaneously at all time steps, not just at a pre-specified stopping time. We employ an adaptive betting strategy within this framework for the growing-window variant of our detector. For the sliding-window variant used in our factorial evaluation, the time-uniform guarantee does not hold; we instead calibrate alarm thresholds empirically via null simulation (\S\ref{sec:far-calibration}), which provides finite-horizon control at the cost of the anytime guarantee.

\paragraph{Two-sample testing on streams.} Kernel MMD~\citep{gretton2012kernel} is the standard nonparametric two-sample test for high-dimensional data, but its original formulation assumes fixed samples. We adapt it to the streaming setting by maintaining a sliding window of embeddings and comparing against frozen reference statistics. The bandwidth is fixed at calibration time following the standard median heuristic, preventing adaptation of kernel parameters to the detection window. This is closer to the block-based kernel two-sample tests of \citet{zaremba2013btests}, which reduce the cost of MMD estimation and provide a foundation for later online extensions, than to the original batch MMD, though we do not claim optimality of the kernel choice.

\paragraph{Conformal prediction under covariate shift.} Split conformal prediction~\citep{vovk2005algorithmic} provides distribution-free coverage guarantees under exchangeability. When the test distribution shifts, exchangeability breaks and coverage degrades. \citet{tibshirani2019conformal} restore coverage by reweighting the empirical distribution of calibration nonconformity scores with density ratios estimated from the covariate shift, provided those density ratios are known or accurately estimated. Our weighted-on-alarm mode implements their approach, triggered by the shift detector rather than assumed available from unlabeled target covariates. The density ratio is estimated via logistic regression on classifier embeddings, a lightweight approximation that avoids the instability of kernel density estimation in high dimensions. It is well established that density-ratio estimation degrades in high dimensions; dimensionality reduction before estimation is a known remedy~\citep{stojanov2019lowdim,sugiyama2011direct}. Our contribution is not the reduction itself but the diagnosis of a specific failure mode: in generative safety-classifier embeddings, logistic regression achieves perfect linear separability, driving all importance weights to the clip floor and eliminating data-driven reweighting (\S\ref{sec:conformal-results}). \citet{gibbs2021adaptive} extend conformal prediction to the fully online setting without exchangeability; we note this as future work. For conformal prediction applied specifically to classification, \citet{romano2020classification} provide adaptive prediction sets that maintain valid coverage while minimizing set size; our abstention criterion (flag when $|C(x)| \neq 1$) follows their framework.

\paragraph{Safety classifier monitoring.} Prior work on monitoring deployed classifiers focuses on performance estimation from unlabeled data~\citep{garg2022leveraging} or drift detection via population-level statistics~\citep{rabanser2019failing}, finding that performance depends strongly on shift type and representation. Several recent works couple shift detection with online adaptation: \citet{podkopaev2022tracking} track the risk of a deployed model and detect harmful distribution shifts using sequential testing (requiring labels upon prediction), and \citet{prinster2025watch} unify monitoring and adaptation within a single weighted-conformal-test-martingale framework that detects changepoints and adapts online to covariate shift. Conformal prediction has been applied to content moderation via conformalized estimates of annotation disagreement~\citep{villatecastillo2024moderation}. Our work differs from these general-purpose monitoring frameworks in scope and emphasis: we conduct a targeted empirical study of LLM safety classifiers under diverse shift conditions, surfacing failure modes that general monitoring does not, specifically the density-ratio collapse mechanism in high-dimensional generative embeddings, the architecture $\times$ shift-type crossover interaction, cross-classifier anomaly detection as a distributional canary for adversarial evasion, and the CS-vs-KS deployment trade-off at low contamination. Architecturally, we use a separate sliding-window detector that triggers a downstream weighted conformal layer (rather than a unified martingale), though we view the architectural choice as secondary to the empirical contributions. The factorial evaluation design (crossing classifiers with shift conditions) and its variance decomposition reveal interaction effects invisible to single-classifier studies; we are not aware of prior work that analyzes classifier $\times$ shift interactions this way for safety classifiers, though variance-decomposition analyses of prompt sensitivity have recently appeared for general LLM evaluation~\citep{romanou2026brittlebench}. The attack characterization motivating such monitoring is established by \citet{leong2026agentic}, who show that persistent memory poisoning achieves 60--95\% execution rates across frontier models, creating the distributional shifts our detector is designed to surface.

\section{Methods}
\label{sec:methods}

\subsection{System Architecture}
\label{sec:architecture}

The monitor observes a stream of classifier outputs $(x_t, s_t, \mathbf{r}_t)$: input text $x_t$, unsafe-class probability $s_t \in [0,1]$, and penultimate-layer representation $\mathbf{r}_t \in \mathbb{R}^d$. The primary detection channel in the factorial evaluation is the KS detector on classifier scores (\S\ref{sec:ks}). The system also includes an MMD detector on embeddings (\S\ref{sec:mmd}), evaluated post-factorial on a representative subset (\S\ref{sec:channel-comparison}).

\subsection{Reference Window Calibration}
\label{sec:reference}

Before monitoring begins, the system collects $n_{\text{ref}}$ records under the known in-distribution regime and freezes the reference CDF $\hat{F}_{\text{ref}}$ (sorted reference scores), against which the KS statistic is computed. The reference CDF is frozen at calibration time. No adaptive estimation occurs during monitoring; this is critical for the validity of the sequential tests.

\subsection{KS Detector}
\label{sec:ks}

The KS detector tracks whether the marginal distribution of classifier scores has changed. It maintains a sliding window of the most recent $w$ scores and computes the one-sample Kolmogorov--Smirnov statistic:
\begin{equation}
D_w = \sup_x |\hat{F}_w(x) - \hat{F}_{\text{ref}}(x)|
\end{equation}
If the input distribution shifts in a way that changes how the classifier scores inputs, the empirical CDF of recent scores diverges from the reference CDF. The KS statistic measures the maximum pointwise divergence, making it sensitive to any location, scale, or shape change in the score distribution. The sliding-window KS test is a standard concept-drift detector; the KSWIN method~\citep{raab2020reactive} compares two adaptive sub-windows of a stream. Our variant differs in that we compare a sliding window against a \emph{frozen} reference CDF rather than two adaptive sub-windows; the frozen reference eliminates adaptation drift but requires an initial calibration period (\S\ref{sec:reference}). We adopt this as a lightweight, well-understood detection channel rather than as a methodological contribution.

\subsection{MMD Detector}
\label{sec:mmd}

The system includes an MMD detector that tracks whether the geometry of classifier representations has changed. It maintains a sliding window of the most recent $w$ embedding vectors and computes the unbiased MMD$^2$ between the reference embeddings and the window:
\begin{equation}
\widehat{\text{MMD}}^2_u(X, Y) = \frac{1}{m(m-1)} \sum_{i \neq j} k(x_i, x_j) + \frac{1}{n(n-1)} \sum_{i \neq j} k(y_i, y_j) - \frac{2}{mn} \sum_{i,j} k(x_i, y_j)
\end{equation}
with Gaussian kernel $k(x, y) = \exp(-\|x - y\|^2 / 2\sigma^2)$. The MMD is zero if and only if the two distributions are identical (for characteristic kernels), making it sensitive to shifts that move the embedding mass even when the score marginal is preserved. The MMD threshold is calibrated via a 1000-permutation bootstrap null on the pooled reference embeddings, set at the $(1{-}\alpha)$ quantile. The factorial evaluation uses the KS detector only; the MMD channel is evaluated separately on a representative subset (\S\ref{sec:channel-comparison}).

\subsection{Confidence Sequences and Alarm Logic}
\label{sec:cs}

Each detector's statistic is wrapped in a confidence sequence (CS) to control false alarm rates over the monitoring horizon.

The guarantee we want: an alarm should fire only when the input distribution has genuinely shifted, with probability of false alarm bounded by $\alpha$ over the entire monitoring period.

What we use in practice: a sliding-window Hoeffding bound. For a window of $n$ bounded observations in $[a, b]$, the confidence interval around the empirical mean has half-width:
\begin{equation}
h = (b - a)\sqrt{\frac{\log(1/\alpha)}{2n}}
\end{equation}
This provides valid coverage for each individual window. However, it does not provide the time-uniform guarantee $\Pr(\exists t: \mu \notin [L_t, U_t]) \leq \alpha$. Over a long monitoring horizon, the probability of at least one false alarm exceeds $\alpha$.

We close the gap via empirical FAR calibration (\S\ref{sec:far-calibration}). Rather than relying on the theoretical bound alone, we simulate the null distribution by running $N_{\text{cal}} = 50$ negative control streams through the full pipeline and set the alarm threshold at the 97th percentile of the maximum observed statistic.

An alarm fires when the reference value exits the confidence interval. Alarms are suppressed during a warmup period of $w$ steps to allow the window to fill.

\subsection{Multiplicity Correction}
\label{sec:multiplicity}

The system architecture supports parallel KS and MMD detectors with \v{S}id\'{a}k correction ($\alpha_{\text{per}} = 1 - (1-\alpha)^{1/k}$) to control the family-wise error rate. Since the factorial evaluation uses the KS detector only ($k = 1$), no multiplicity correction is applied to the reported results.

\subsection{Statistical Methodology}
\label{sec:stats}

All statistical analyses use the following specifications: BCa bootstrap CIs on means (10,000 resamples, seed $= 42$, bias-corrected and accelerated method); CIs on rates via Wilson Score interval at 95\% confidence; CIs on coverage proportions via Clopper--Pearson exact interval; permutation tests (10,000 permutations for pairwise comparisons; 1,000 for ANOVA, sufficient given all $p < 0.001$); significance level $\alpha = 0.05$ throughout. Holm--Bonferroni correction is applied to the 8 highlighted pairwise comparisons discussed in \S\ref{sec:detection}: (1) decoder vs encoder on paraphrase, (2) encoder vs decoder on adversarial suffix, (3) DeBERTa adversarial vs paraphrase, (4) Llama Guard paraphrase vs adversarial, (5) window 100 vs 200 (paired), (6) Llama Guard $\times$ code-switch vs grand mean, (7) ShieldGemma paraphrase vs adversarial, (8) FAR DeBERTa vs Text-Moderation; all 8 survive correction at the family-wise $\alpha = 0.05$ level (weakest adjusted $p = 0.044$).

\subsection{Empirical FAR Calibration}
\label{sec:far-calibration}

We run $N_{\text{cal}} = 50$ negative control streams (reference data only, no shift injected) through the full detection pipeline. For each stream, we record the maximum KS statistic observed over the monitoring horizon. The alarm threshold is set at the $p$-th percentile of these maxima; in the factorial evaluation, $p = 97$. The empirical calibration accounts for the sliding-window correlation structure, the specific window size, and the stream length, factors that the theoretical Hoeffding bound treats conservatively.

\subsection{Scan Martingale}
\label{sec:martingale}

The empirical FAR calibration of \S\ref{sec:far-calibration} produces a $5\times$ spread across classifiers (Text-Moderation 2.0\% vs DeBERTa 9.5\%). We replace this with a conformal test martingale---a standard sequential testing tool~\citep{howard2021timeuniform,waudbysmith2024estimating}---that provides anytime-valid FAR control without per-classifier threshold tuning. Our contribution is not the martingale construction itself but its application to safety-classifier monitoring and the empirical characterization of when it outperforms sliding-window KS.

For each incoming score $s_t$, we compute a two-sided conformal $p$-value against the frozen reference CDF: $p_t = 2 \min(F_{\text{ref}}^+(s_t), F_{\text{ref}}^-(s_t))$. Conformal $p$-values are clipped at $\delta = 1/(n_{\text{ref}}+1)$ to prevent log-wealth divergence under discrete score distributions where the test point may fall outside all reference values. The reference CDF is constructed from a held-out calibration set of 500 benign prompts, disjoint from the test prompts used for FAR evaluation; all thresholds are fixed before test-set evaluation. The betting function uses the power method:
\begin{equation}
\log W_t = \log W_{t-1} + \log \varepsilon + (\varepsilon - 1) \log p_t
\end{equation}
where $\varepsilon = 0.3$ is the betting parameter. We maintain $M = 50$ concurrent sub-martingales (one started per timestep), each tested at threshold $\log(M/\alpha)$ (union bound). An alarm fires when any sub-martingale exceeds this threshold. Results are validated at $M \in \{50, 100\}$; broader window sweeps are deferred (see \S\ref{sec:limitations}).

By Ville's inequality applied to each active sub-martingale and a union bound over the $M = 50$ active windows, $P(\exists t: \text{alarm} \mid H_0) \leq \alpha$ under exchangeability. This finite-horizon guarantee applies to the bounded-window construction ($M$ windows, tested horizon) and does not extend to arbitrarily long monitoring horizons with unlimited martingale restarts beyond this configuration. Empirically, the scan martingale achieves FAR $\leq 1\%$ across all four classifiers with no per-classifier null-threshold calibration (vs.\ 2--10\% under empirical KS calibration); $\varepsilon$ and $M$ remain deployment hyperparameters.

While the martingale guarantees uniform FAR without per-model calibration, detection power depends on $\varepsilon$ and $W$: these design choices trade Type~I guarantee for Type~II detection delay. $\varepsilon = 0.3$ achieves 100\% detection for encoders but 57--77\% for decoders with wide null distributions. A practitioner may increase $\varepsilon$ for decoder-heavy deployments at the cost of slower detection.

\paragraph{Honesty note on detection power.} Under instantaneous (step-onset) shift at full mixing, per-condition calibrated KS thresholds (97th-percentile tuned per shift type) match or exceed the scan martingale's detection rate. The scan martingale's contribution is \emph{operational}: exchangeability-valid FAR $\leq \alpha$ with no per-classifier null-threshold tuning ($\varepsilon$ and $M$ remain shared hyperparameters). Against gradual-onset adversaries who ramp shift slowly, the scan martingale has substantially higher power than the sliding-window KS (see AD1 below; 67\% vs 7\% at 20\% mixing with a 50-step ramp). If a practitioner invests in per-condition KS calibration \emph{and} shift onset is abrupt, KS is at least as sensitive; for gradual or ramped onset, the scan martingale is preferred.

\paragraph{Monitor-aware adversary (AD1).} The martingale's advantage emerges against adversaries who exploit the KS detector's window structure. We simulate a ramped-onset adversary that gradually introduces shifted samples over a 50-step ramp. Against DeBERTa with small per-sample shift (Figure~\ref{fig:ad1}):

\begin{table}[h]
\centering
\small
\caption{Detection rate under ramped-onset adversary (DeBERTa, 50-step ramp, small per-sample shift). KS uses a single global 97th-percentile threshold; scan martingale uses $\varepsilon=0.3$, $W=50$.}
\label{tab:ad1}
\begin{tabular}{lcccc}
\toprule
\textbf{Mixing rate} & \textbf{KS detection} & \textbf{Scan detection} & \textbf{Gap (pp)} \\
\midrule
15\% & 3\% & 37\% & +34 \\
20\% & 7\% & 67\% & +60 \\
25\% & 10\% & 87\% & +77 \\
30\% & 47\% & 100\% & +53 \\
\bottomrule
\end{tabular}
\vspace{2pt}
{\footnotesize\textit{Note:} KS threshold here is a single global null-distribution threshold (97th-percentile pooled across conditions). Per-condition calibrated KS substantially narrows the gap at full mixing (see Honesty Note above); the scan martingale's advantage is largest under ramped onset with a conservatively-set global threshold.}
\end{table}

\noindent At 20\% mixing with ramped onset, KS is effectively blind (7\% detection) while the scan martingale catches two-thirds of injections.

\begin{figure}[t]
\centering
\includegraphics[width=0.6\textwidth]{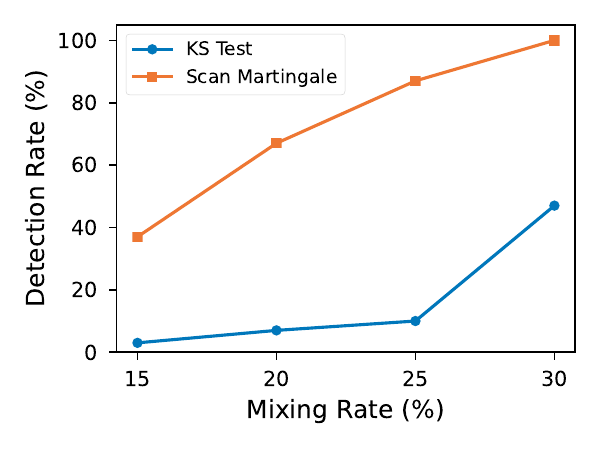}
\caption{Detection rate under ramped-onset adversary. The scan martingale dominates KS at low mixing rates where the per-window signal is too diffuse for a single KS threshold exceedance.}
\label{fig:ad1}
\end{figure}

\paragraph{Residual blind spot.} Against a constant sub-threshold adversary (sustained low-rate injection at $\leq$12\% mixing without ramp), neither KS nor the scan martingale reliably detects. The martingale's advantage is specific to ramped or accelerating onsets.

For long-horizon deployments where benign drift may violate exchangeability, we recommend a rolling reference window (gradual drift triggers alarms on encoders but not decoders; block-structured violations always trigger). In production environments with high temporal auto-correlation, exchangeability can be further preserved by sub-sampling the stream at session-level boundaries.

\subsection{Conformal Abstention Layer}
\label{sec:conformal}

Upon alarm, the system activates a conformal prediction layer that adapts decision thresholds to preserve a target error rate $\epsilon$ without requiring new labeled data.

\paragraph{Unweighted mode.} Standard split-conformal prediction~\citep{vovk2005algorithmic}. Given $n$ calibration examples with known labels: (1) compute nonconformity scores $\alpha_i = 1 - f(x_i)_{y_i}$; (2) set threshold $\hat{q}$ at the $\lceil(1-\epsilon)(n+1)\rceil/n$-th quantile; (3) at test time, include class $y$ in the prediction set if its nonconformity score $\leq \hat{q}$; (4) abstain when the prediction set contains more than one class or is empty. Under exchangeability, this guarantees $1-\epsilon$ coverage. Under covariate shift, exchangeability breaks and coverage degrades.

\paragraph{Weighted-on-alarm mode.} After the shift detector fires, we estimate density ratios $w_i = p_{\text{target}}(x_i) / p_{\text{source}}(x_i)$ via logistic regression on source vs.\ target embeddings~\citep{tibshirani2019conformal}. The ratios are clipped to $[1/C, C]$ with $C = 10$ for stability. The conformal quantile is recomputed as a weighted quantile:
\begin{equation}
\hat{q}_w = \inf\left\{q : \sum_{i: \alpha_i \leq q} \tilde{w}_i \geq 1 - \epsilon\right\}
\end{equation}
where $\tilde{w}_i = w_i / (\sum_j w_j + 1)$. This reweights the calibration scores to account for the covariate shift, restoring the coverage guarantee under the new distribution provided the density ratios are accurately estimated. The test-point contribution $w(X_{n+1})$ is set to 1.0 following \citet{tibshirani2019conformal}; when calibration weights collapse to the floor value of 0.1, this asymmetry shifts the effective quantile level by approximately 3 percentage points, producing the small residual recoveries ($\leq$+0.10) visible in Table~\ref{tab:conformal} at ESS$\approx$300.

\subsection{Variance Decomposition}
\label{sec:anova}

To quantify which experimental factors drive detection latency, we fit a two-way fixed-effects ANOVA:
\begin{equation}
\text{latency}_{ijk} = \mu + \alpha_i^C + \beta_j^S + (\alpha\beta)_{ij}^{CS} + \epsilon_{ijk}
\end{equation}
with factors classifier ($C$, 4 levels) and shift type ($S$, 5 levels). We report $\eta^2$ (proportion of total sum of squares) for each term. Bootstrap confidence intervals (1000 resamples, percentile method) quantify uncertainty in the main-effect estimates. The interaction term $(\alpha\beta)^{CS}$ captures classifier--shift pairings that are systematically easier or harder than predicted by the marginal effects alone.

\section{Experimental Setup}
\label{sec:setup}

\subsection{Classifiers}
\label{sec:classifiers}

\begin{table}[t]
\caption{Safety classifiers evaluated. Encoders fine-tuned on WildGuardMix (binary classification); decoders use original pre-trained weights.}
\label{tab:classifiers}
\centering
\small
\begin{tabular}{llccc}
\toprule
\textbf{Classifier} & \textbf{Architecture} & \textbf{Parameters} & \textbf{Embedding dim} & \textbf{Training data} \\
\midrule
DeBERTa-v3-large & Transformer encoder & 304M & 1024 & WildGuardMix \\
Text-Moderation (KoalaAI) & DeBERTa-v3-base & 86M & 768 & WildGuardMix$^\dagger$ \\
Llama Guard 3 & Decoder-only LLM & 8B & 4096 & Proprietary \\
ShieldGemma & Decoder-only LLM & 9B & 3584 & Proprietary \\
\bottomrule
\end{tabular}
\vspace{2pt}

{\footnotesize $^\dagger$Fine-tuned from \texttt{KoalaAI/Text-Moderation} (originally trained on proprietary data).}
\end{table}

The selection spans two architectural families (discriminative encoders and generative decoders) and two scales within each family, enabling analysis of both architecture and scale effects.

\subsection{Shift Conditions}
\label{sec:shifts}

\begin{table}[t]
\caption{Shift conditions. Each corpus is subsampled to 300 examples per factorial cell (temporal: 292 unique examples padded to 300 via repetition).}
\label{tab:shifts}
\centering
\small
\begin{tabular}{lll}
\toprule
\textbf{Condition} & \textbf{Mechanism} & \textbf{Threat model} \\
\midrule
Paraphrase & GPT-4o paraphrasing of harmful prompts & Organic rephrasing \\
Code-switch & Multilingual transliteration & Non-English users \\
Compositional & Multi-turn concatenation into long contexts & Context-window attacks \\
Temporal & Real jailbreaks from public red-team databases & Emerging harm categories \\
Adversarial suffix & GCG-optimized suffixes~\citep{zou2023universal} & Automated red-teaming \\
\bottomrule
\end{tabular}
\end{table}

The five conditions span the spectrum from naturalistic drift (paraphrase, temporal) to deliberate adversarial attack (GCG suffixes), with code-switch and compositional as intermediate cases.

\paragraph{Corpus validation.} We manually reviewed a sample from each corpus (50 examples for paraphrase and code-switch, 20 for compositional, temporal, and adversarial suffix). Paraphrase: approximately 14--20\% of examples became LLM refusals where the paraphrasing model declined to paraphrase harmful content, outputting safety responses instead. These contaminate the paraphrase condition with a refusal-text signal distinct from the intended semantic-preserving paraphrase shift; the monitor detects distributional change in the production stream regardless of mechanism, so detection latencies for paraphrase reflect a mixture of semantic shift and refusal artifacts and should be interpreted conservatively. A filtered ablation (removing the 9.4\% of examples identified as refusals, $n{=}5$ seeds $\times$ 2 classifiers) confirms this has negligible effect on detection: DeBERTa latency 38.0$\rightarrow$37.8 steps, Llama Guard 66.6$\rightarrow$60.8 steps, both 5/5 detected. Code-switch: all 50 examples were confirmed as authentic Singlish by a native speaker; 20--30\% became refusals, with the same caveat. Compositional: 20/20 correctly placed harmful content at the stated position (beginning/middle/end) within benign long-context wrappers; 100\% structural integrity. Temporal: 20/20 reviewed examples were genuine jailbreak prompts covering persona hijack, DAN-style injection, and NSFW extraction; zero false positives. The full temporal corpus (292 examples) draws from three public red-team databases: \texttt{lmsys/toxic-chat} (39\%), JailbreakBench (34\%), and ChatGPT-Jailbreak-Prompts (27\%). Adversarial suffix: 20/22 showed correct suffix concatenation with confirmed score flips (original $\geq 0.95 \rightarrow$ attacked $\leq 0.01$); one example was excluded post-validation (original score 0.002, already classified benign).

\subsection{Factorial Design}
\label{sec:factorial}

Full factorial: 4 classifiers $\times$ 5 shift conditions $\times$ 20 random seeds $\times$ 2 window sizes (100, 200) $= 800$ cells. Each cell runs a complete detection pipeline: reference window calibration $\rightarrow$ stream simulation with shift onset $\rightarrow$ alarm detection $\rightarrow$ latency measurement.

The factorial design, including all hyperparameters, was committed before execution (commit \texttt{be630f3}). Each cell includes a parallel negative control run (reference data only, no shift) to verify the alarm threshold does not fire on in-distribution data. A cell is marked as a valid detection only if: (1) detection latency $\geq 0$, and (2) the negative control does not alarm.

\paragraph{Compute.} Mac Studio (M3 Ultra, 96GB) for Llama Guard 3 and ShieldGemma; MacBook Pro (M3 Max, 128GB) for DeBERTa and Text-Moderation. Total wall-clock: ${\sim}120$ hours. All 800 cells completed without error.

\subsection{Deviations from Pre-Registration}
\label{sec:deviations}

The only scope reduction from the pre-registration is window sizes: 100 and 200 were executed; $w{=}500$ was dropped because it produces insufficient post-shift observations with 300 shifted examples. Seeds (20), ground-truth regimes (A, B, C), and all hyperparameters ($\alpha{=}0.05$, calibration percentile${=}97$, reference size${=}500$) match the pre-registration exactly.

\subsection{Reproducibility}
\label{sec:reproducibility}

Code, configurations, pre-registration document, and raw results are available at \url{https://github.com/junwenleong/safety-classifier-shift-monitor}. The pre-registration is anchored at commit \texttt{be630f3}; all hyperparameters are specified in version-controlled YAML files under \texttt{configs/}. Encoder fine-tuned checkpoints are not committed due to size; \texttt{scripts/finetune\_deberta.py} and \texttt{scripts/finetune\_text\_moderation.py} reproduce them from WildGuardMix with fixed seeds (${\sim}$2 hours each on Apple Silicon). A verification script (\texttt{scripts/verify\_paper\_numbers.py}) confirms the original factorial statistics against raw data (21 assertions, all passing); the extended conformal evaluation (Table~\ref{tab:conformal}) is verified against \texttt{results/conformal\_full.json}. Shifted corpora are generated offline with fixed seeds and committed before evaluation.

\section{Results}
\label{sec:results}

\subsection{RQ1: Detection Performance}
\label{sec:detection}

The system detects shift in 693 of 800 cells (86.6\% valid detection rate, 95\% Wilson CI [0.841, 0.888]), with empirical false alarm rates of 2--10\% across classifiers. ``Valid detection'' requires both a shift-stream alarm and a clean negative-control stream; raw TPR across alarming cells is 91.5\%, with the 5.8 pp gap attributable to false alarms on negative controls (Appendix~\ref{app:tpr-far}).

\begin{table}[t]
\caption{Mean detection latency (steps) with 95\% bootstrap CIs. $N{=}20$ seeds $\times$ 2 window sizes $= 40$ cells per combination; $n$ is valid detections.}
\label{tab:latency}
\centering
\small
\begin{tabular}{lccccc}
\toprule
\textbf{Classifier} & \textbf{Paraphrase} & \textbf{Code-switch} & \textbf{Compositional} & \textbf{Temporal} & \textbf{Adversarial} \\
\midrule
DeBERTa & 28.4 [26, 31] & 32.1 [30, 35] & 24.5 [22, 27] & 23.5 [22, 25] & 36.6 [33, 40] \\
Text-Mod. & 34.6 [32, 37] & 33.2 [31, 36] & 29.6 [28, 32] & 24.8 [23, 26] & 25.3 [23, 27] \\
Llama Guard & 69.4 [63, 76] & 93.4 [81, 107] & 47.0 [43, 51] & 42.1 [39, 45] & 27.8 [26, 30] \\
ShieldGemma & 85.0 [76, 94] & 81.8 [73, 90] & 43.9 [40, 48] & 27.1 [25, 29] & 26.8 [25, 29] \\
\bottomrule
\end{tabular}
\end{table}

The table reveals a crossover interaction. Paraphrase is easy for encoders (28--35 steps) but hard for decoders (69--85 steps). Adversarial suffix is the hardest condition for DeBERTa (36.6) but the easiest for Llama Guard (27.8). This crossover is not visible in any single-classifier study and motivates RQ3. The slowest cell is Llama Guard $\times$ code-switch (93.4 steps [81, 107]); the generation mechanism is largely invariant to surface-level script changes, requiring many shifted examples before the score distribution diverges detectably.

\begin{figure}[t]
\centering
\includegraphics[width=0.85\textwidth]{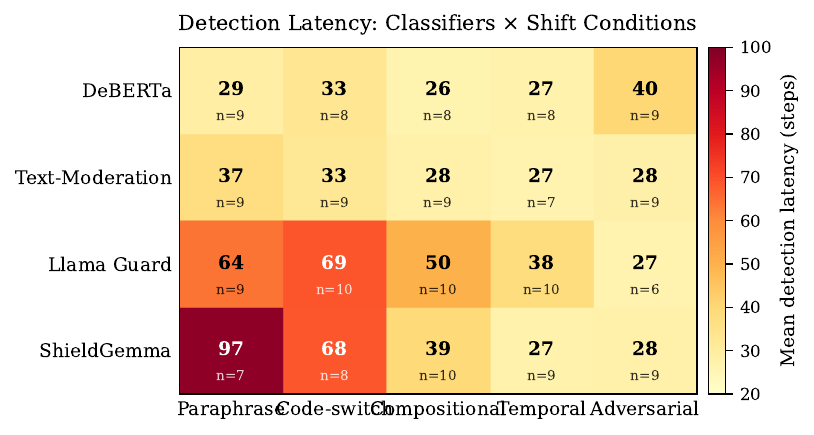}
\caption{Detection latency heatmap (classifier $\times$ shift condition). Darker cells indicate slower detection. The crossover interaction is visible: encoders detect paraphrase fast but adversarial suffix slow; decoders show the opposite pattern.}
\label{fig:heatmap}
\end{figure}

\paragraph{Window size.} $w{=}100$ detects 7 steps faster on average (39.5 [37.0, 42.4] vs 45.4 [42.6, 48.3]; paired difference $-7.0$ [$-8.6$, $-5.3$], $n{=}314$ pairs) at the cost of slightly higher false alarm rates.

\paragraph{False alarm rates (95\% Wilson CIs).} Text-Moderation 2.0\% [0.8\%, 5.0\%] $<$ Llama Guard 3.0\% [1.4\%, 6.4\%] $<$ ShieldGemma 8.5\% [5.4\%, 13.2\%] $<$ DeBERTa 9.5\% [6.2\%, 14.4\%]. The $5\times$ spread reflects differences in score distribution shape under the null (Figure~\ref{fig:null-kde}): classifiers with wider score distributions (ShieldGemma $\sigma{=}0.14$, Llama Guard $\sigma{=}0.14$) produce more variable KS statistics, requiring higher calibrated thresholds and yielding higher residual FAR.

\begin{figure}[t]
\centering
\includegraphics[width=0.85\textwidth]{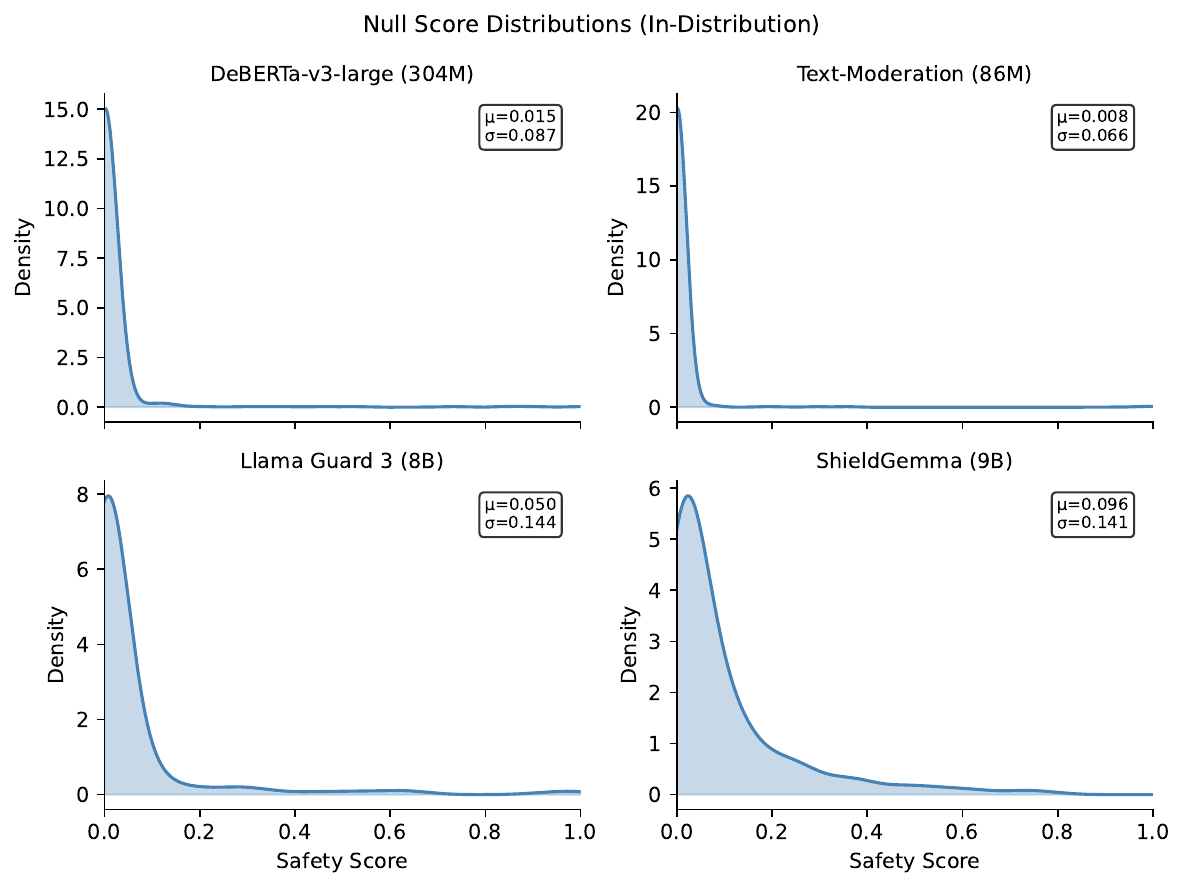}
\caption{Null score distributions (in-distribution, $n{=}500$ per classifier). Text-Moderation and DeBERTa produce tightly concentrated scores near zero; Llama Guard and ShieldGemma show wider distributions with heavier tails, explaining the $5\times$ FAR spread across classifiers.}
\label{fig:null-kde}
\end{figure}

\paragraph{Failure analysis.} Of 107 invalid cells: 68 alarmed before shift onset (early alarm), 46 had dirty negative controls (false alarm), 7 exhibited both. Failures concentrate in DeBERTa $\times$ adversarial-suffix (13/40 invalid, 32.5\%) and Llama Guard $\times$ adversarial-suffix (11/40 invalid, 27.5\%). The DeBERTa failure is mechanistically explained by Regime C (\S\ref{sec:regime-c}): GCG suffixes push DeBERTa's scores into the safe region, producing no distributional signal. The Llama Guard failure reflects the opposite: adversarial tokens produce such a strong signal that the detector fires during calibration as well.

\subsection{RQ2: Conformal Adaptation}
\label{sec:conformal-results}

Evaluated across all four classifiers on three shift types (temporal, paraphrase, adversarial suffix), extending the initial temporal-only evaluation to test whether weighted conformal recovery is shift-type-dependent.

\begin{table}[t]
\caption{Conformal adaptation: coverage gap (unweighted) and recovery (weighted-on-alarm) across classifiers and shift types. ESS $= (\sum_i w_i)^2 / \sum_i w_i^2$; in this context ESS near the maximum ($n_{\text{cal}}{=}300$) indicates \emph{degenerate} uniformity (all calibration weights floor-clipped to $1/C$) rather than successful adaptation. Lower ESS reflects genuine density-ratio variation and more effective reweighting. $n{=}200$ post-shift examples per cell.}
\label{tab:conformal}
\centering
\small
\begin{tabular}{l ccc ccc ccc}
\toprule
& \multicolumn{3}{c}{\textbf{Temporal}} & \multicolumn{3}{c}{\textbf{Paraphrase}} & \multicolumn{3}{c}{\textbf{Adversarial suffix}} \\
\cmidrule(lr){2-4} \cmidrule(lr){5-7} \cmidrule(lr){8-10}
\textbf{Classifier} & Gap & Rec. & ESS & Gap & Rec. & ESS & Gap & Rec. & ESS \\
\midrule
DeBERTa & .085 & +.160 & 88 & .515 & +.390 & 46 & .325 & +.020 & 206 \\
Text-Mod. & .090 & +.020 & 300 & .515 & +.005 & 300 & $-.020$ & +.000 & 300 \\
Llama Guard & .350 & +.020 & 300 & .715 & +.015 & 300 & .260 & +.100 & 300 \\
ShieldGemma & .225 & +.075 & 300 & .725 & +.005 & 300 & .445 & +.060 & 300 \\
\bottomrule
\end{tabular}
\end{table}

The table reveals three findings. First, density-ratio collapse (ESS${\approx}$300, indicating uniform floor-clipped weights) is consistent across all three shift types for Text-Moderation, Llama Guard, and ShieldGemma. The collapse is a high-dimensional separability artifact, not a shift-specific phenomenon. Second, DeBERTa is the sole classifier where weighted conformal achieves meaningful recovery, but its effectiveness shows a gradient across shift types: paraphrase (ESS${=}$46, recovery${=}$+0.390) $>$ temporal (ESS${=}$88, recovery${=}$+0.160) $>$ adversarial suffix (ESS${=}$206, recovery${=}$+0.020). Third, DeBERTa on adversarial suffix shows near-total collapse (92.7\% of weights at floor, max weight 0.82) despite not reaching the binary collapse threshold; the GCG suffixes, optimized against DeBERTa specifically, push its embeddings into a region nearly perfectly separable from reference.

\paragraph{Mechanism.} The logistic regression density ratio estimator achieves perfect source/target separability in all collapsed cases, assigning $\hat{P}(\text{target} \mid x) \approx 0$ to every calibration point. All weights clip to the floor $1/C = 0.1$, eliminating data-driven reweighting. The residual $\leq$+0.10 recoveries observed at ESS$\approx$300 are explained by the test-point term: the implicit weight $w(X_{n+1})=1.0$ raises the effective quantile level from $90.3\%$ to $93.3\%$ at $n_{\text{cal}}=300$, $\varepsilon=0.1$: a formula artifact, not adaptation. ESS is the continuous predictor of recovery effectiveness: ESS${=}$46 yields +0.390 recovery; ESS${=}$88 yields +0.160; ESS${\geq}$200 yields ${\leq}$+0.020.

\paragraph{Dimensionality reduction as diagnostic.} To confirm the collapse is a dimensionality artifact, we applied PCA to 32 dimensions before density ratio estimation. This breaks the collapse for both generative classifiers on temporal shift: Llama Guard recovers 33.0 pp (coverage 0.555 $\rightarrow$ 0.885, ESS drops from 300 to 19.6), and ShieldGemma recovers 20.5 pp (0.690 $\rightarrow$ 0.895, ESS drops from 300 to 85.0). At 32 dimensions, 82--91\% of embedding variance is retained, but the logistic classifier can no longer achieve perfect separability. At 64 dimensions, ShieldGemma re-collapses (ESS${=}$300) while Llama Guard remains fixed (ESS${=}$135); at 128 dimensions, both re-collapse. The critical threshold is ${\leq}$32 dimensions for these embedding spaces. DeBERTa (1024-d, no baseline collapse) shows no degradation under PCA, confirming the reduction removes noise dimensions rather than safety-relevant signal. This result demonstrates that the collapse is driven by the high-dimensional separability identified by \citet{stojanov2019lowdim} in the general covariate-shift setting; the specific contribution here is diagnosing the failure mode in generative safety-classifier embeddings and showing it eliminates data-driven reweighting.

\subsection{RQ3: Variance Decomposition}
\label{sec:variance}

\begin{table}[t]
\caption{Two-way ANOVA on detection latency (693 valid detections). Main-effect 95\% CIs are percentile bootstrap (1000 resamples); interaction CI is from seed-clustered bootstrap (Appendix~\ref{app:anova-robustness}).}
\label{tab:anova}
\centering
\small
\begin{tabular}{lccc}
\toprule
\textbf{Factor} & $\boldsymbol{\eta^2}$ & \textbf{95\% CI} & \textbf{Permutation $p$} \\
\midrule
Classifier & 0.243 & [0.205, 0.291] & $< 0.001$ \\
Shift type & 0.237 & [0.193, 0.293] & $< 0.001$ \\
Classifier $\times$ Shift & 0.185 & [0.164, 0.223] & $< 0.001$ \\
Residual & 0.335 & -- & -- \\
\bottomrule
\end{tabular}
\end{table}

All three systematic factors are significant (1000 permutations, all $p < 0.001$). Classifier (24.3\%) and shift type (23.7\%) are the two largest systematic factors, with the interaction (18.5\%) close behind. The three factors contribute roughly equally; neither classifier choice nor shift type alone determines detection difficulty, and their interaction adds substantial additional variance.

\begin{figure}[t]
\centering
\includegraphics[width=0.7\textwidth]{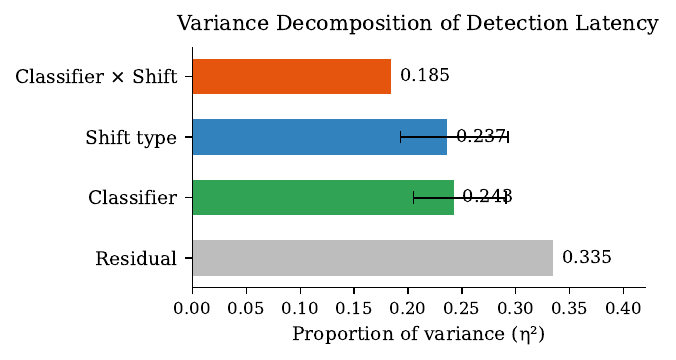}
\caption{Variance decomposition of detection latency. All three systematic factors contribute substantially (18--24\% each), with residual variance at 33.5\%.}
\label{fig:variance}
\end{figure}

The top interactions by magnitude: DeBERTa $\times$ adversarial-suffix ($+21.6$ steps above expected), ShieldGemma $\times$ paraphrase ($+19.9$), Llama Guard $\times$ code-switch ($+17.9$), Llama Guard $\times$ adversarial-suffix ($-16.0$). A monitoring system that sets thresholds based on classifier-level or shift-level averages will systematically under-alert on hard pairings and over-alert on easy ones.

\paragraph{Replication stability.} The initial $N{=}5$ estimate (200 cells) yielded $\eta^2 = 0.265$ for the interaction, which appeared to be the largest factor. At $N{=}20$ (800 cells), the interaction shrinks to 0.185 while the main effects grow (classifier: $0.196 \rightarrow 0.243$, shift: $0.217 \rightarrow 0.237$). The inflated $N{=}5$ interaction estimate is consistent with small-sample noise amplifying cell-level outliers. The qualitative finding (that the interaction is significant and substantial) replicates; the quantitative claim that it ``dominates'' does not.

\subsection{Robustness Across Ground-Truth Regimes}
\label{sec:regimes}

The preceding results use Regime A: synthetic shift onset at a known step. We evaluate two additional ground-truth regimes.

\paragraph{Regime B (Temporal split).} Real temporal jailbreaks from public red-team databases replace the synthetic shift corpus. The monitor receives no signal about when or whether shift occurs. Evaluated on DeBERTa and Llama Guard $\times$ temporal corpus $\times$ 5 seeds $\times$ 2 window sizes $= 20$ cells. The system detects shift in 17 of 20 cells (85\% detection rate, Wilson 95\% CI [64\%, 95\%]), with mean detection latency of 32.6 steps. Llama Guard achieves 10/10; DeBERTa achieves 7/10. Detection generalizes to naturally-occurring shift.

\paragraph{Regime C (When canaries fail: adversarial success on the target).}
\label{sec:regime-c}
We filter GCG-optimized suffixes to the 22 examples that successfully flip DeBERTa's classification (one example excluded post-validation: original score 0.002, already classified benign), then stream them through all four classifiers. The result is a classifier-level demonstration of how the same adversarial inputs produce opposite distributional effects across architectures.

DeBERTa (target classifier): adversarial suffixes shift scores toward safe (post $\approx 0.01$), making attacked examples indistinguishable from the safe reference distribution. The monitor fails to detect in 38/40 seed-window combinations.

Llama Guard (non-target classifier): the same adversarial suffixes shift scores toward unsafe (post $= 0.78$), producing a massive upward shift ($+0.73$) that the monitor detects in 14/40 seed-window combinations.\footnote{Score direction diagnostics (post-shift mean scores and deltas) were computed at evaluation time via \texttt{scripts/check\_regime\_c\_direction.py} and are not reproduced by the verification script, which checks only detection counts. Full KS statistic time series for all four classifiers are stored in \texttt{results/regime\_c\_ks\_series.json} and can be reproduced without model inference.} A score direction analysis confirms all 14 detections are TOWARD\_UNSAFE: the GCG suffixes did not transfer their adversarial effect to Llama Guard. They made inputs appear more dangerous, not safer.

\begin{figure}[t]
\centering
\includegraphics[width=0.85\textwidth]{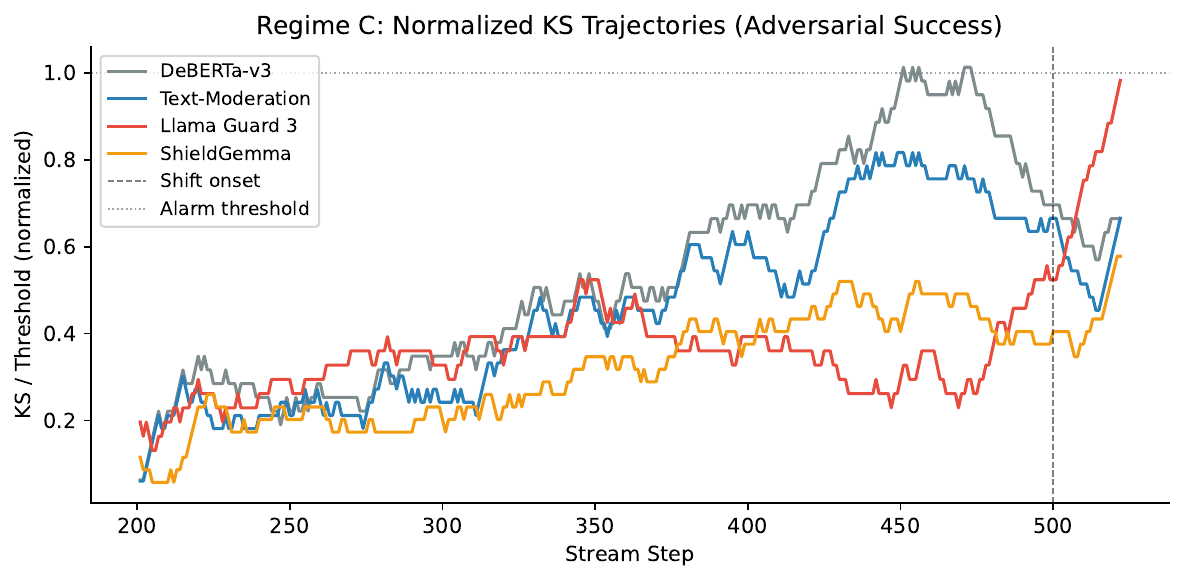}
\caption{Regime C: KS statistic trajectories normalized by per-classifier threshold ($y{=}1.0$ is the alarm line). All classifiers hover near but below their thresholds on the 22-example adversarial corpus, explaining why detection is marginal. DeBERTa exceeds threshold briefly (false alarm before onset); the others remain sub-threshold.}
\label{fig:regime-c-ks}
\end{figure}

The correct interpretation is cross-classifier anomaly detection, not adversarial transfer: adversarial perturbations optimized against one classifier appear anomalous to architecturally different classifiers, enabling detection via score distribution shift even without attack transfer.

\paragraph{Corpus limitation.} The Regime C corpus comprises 22 successful adversarial examples; the identical Llama Guard score across examples (0.78) reflects a fixed adversarial token pattern rather than distributional variation. The 14 Llama Guard detections are replications of a single observation across seeds, not independent shift events. With window size 200 and calibrated thresholds, no classifier reliably detects shift from this corpus (Figure~\ref{fig:regime-c-ks}): all four KS trajectories hover near but below their respective thresholds, confirming the 22-example corpus is at the margin of detectability. The finding is robust and reproducible (full KS time series stored), but should be interpreted as a classifier-level result rather than a cell-level detection rate. A larger adversarial corpus with varied attack patterns would be needed to establish reliable cross-classifier detection.

\section{Discussion}
\label{sec:discussion}

\subsection{Per-Classifier Monitoring Profiles}
\label{sec:profiles}

The variance decomposition shows that classifier ($\eta^2 = 0.243$), shift type ($\eta^2 = 0.237$), and their interaction ($\eta^2 = 0.185$) all contribute substantially to detection latency. The interaction magnitude is large enough that a uniform monitoring configuration will systematically miscalibrate: under-alerting on hard pairings and over-alerting on easy ones.

Concretely: DeBERTa requires aggressive monitoring for adversarial suffixes ($+21.6$ steps above expected) but can afford relaxed thresholds for paraphrase. ShieldGemma requires the opposite profile: tight monitoring for paraphrase ($+19.9$ steps) but standard thresholds for adversarial inputs.

\subsection{Encoder vs.\ Decoder Safety Models}
\label{sec:encoder-decoder}

The factorial reveals two distinct architectural patterns that should not be conflated.

\paragraph{RQ1 (Detection latency): clean encoder/decoder split.} Discriminative models (DeBERTa, Text-Moderation) detect paraphrase and compositional shift fast (26--37 steps), while being slower on adversarial suffixes. Generative safety models (Llama Guard, ShieldGemma) show the opposite: slow on paraphrase (64--97 steps) and fast on adversarial suffixes (26--28 steps). This crossover interaction separates cleanly by architecture family in our four-model evaluation: surface-level rephrasing alters tokens that discriminative models attend to directly, while adversarial suffixes disrupt the generation distribution more visibly than a classification head.

\paragraph{RQ2 (Conformal recovery): not an encoder/decoder split.} The density-ratio collapse pattern does \emph{not} follow the same architectural divide. Text-Moderation (encoder, 86M, 768-d) collapses on all three shift types despite sharing training data (WildGuardMix) and architecture family (DeBERTa-v3) with the sole non-collapsing classifier. Only DeBERTa-v3-large (304M, 1024-d) avoids collapse.

We observe collapse despite same-distribution fine-tuning in the smaller encoder (Text-Moderation, 86M, 768-d, 9-class pretrained) but not in the larger encoder (DeBERTa-v3-large, 304M, 1024-d, binary). We hypothesize this reflects the multi-class pretraining objective of Text-Moderation (9 content categories: S, H, V, HR, SH, S3, H2, V2, OK), which shapes its embedding space to cleanly separate content categories, making it trivially easy for logistic regression to distinguish any two distributions in that space. However, we cannot distinguish this from scale effects (86M vs 304M, 768-d vs 1024-d) without a controlled ablation. The mechanism question remains open; what is empirically clear is that collapse is near-universal and training-distribution alignment alone does not prevent it.

\subsection{Detection Channel Comparison: KS vs.\ CS vs.\ MMD}
\label{sec:channel-comparison}

\paragraph{CS growing-window detector.} On a representative subset (4 classifiers $\times$ 3 shifts $\times$ 10 seeds $= 120$ cells), the growing-window confidence sequence achieves 120/120 detection (100\%) with 0/40 false alarms on reference-only streams (empirical FAR $= 0.0$). The observed result is consistent with the time-uniform guarantee of \citet{waudbysmith2024estimating}: no false alarm fired at any point during monitoring across all 40 null streams. However, detection latency is approximately $2\times$ that of KS (e.g., DeBERTa $\times$ paraphrase: CS 57.2 vs KS 25.1 steps).

The operational picture changes at low mixing proportions. At 30\% contamination (where only 30\% of post-onset traffic is shifted), the CS detector achieves 97\% detection (29/30, DeBERTa $\times$ paraphrase, $n{=}30$ seeds) while KS achieves only 43\% (13/30; Fisher exact $p < 0.0001$, Wilson CIs [0.83, 0.99] vs [0.27, 0.61], non-overlapping). The growing-window CS accumulates evidence across the full history rather than comparing a fixed sliding window, enabling detection of weaker shifts that fall below the per-window KS threshold. At 50\%+ mixing, both detect reliably but KS is faster.

This defines a deployment profile: \textbf{KS is preferred when shift signals are strong} (high mixing, abrupt onset, fast detection needed) and \textbf{CS is necessary when shift signals are weak} (low mixing, gradual contamination, reliable detection prioritized over speed). Real-world drift is rarely 100\% contamination; the CS advantage at low mixing is operationally significant.

\paragraph{Gradual drift and mixing sensitivity.} A ramp-rate sweep (DeBERTa $\times$ paraphrase, mixing ramps from 0\% to target over 50--200 steps, $n{=}10$ per condition) reveals that detection latency scales with ramp duration: 94 steps at 50-step ramp, 210 steps at 200-step ramp ($9/10$ detected). The mixing-level sweep at fixed 50-step ramp shows detection rate increasing monotonically: 43\% at 30\% mixing, 100\% at 50\%+, with latency decreasing from 94 to 64 steps as mixing increases from 50\% to 100\%.

\paragraph{MMD on embeddings.} The MMD detector on penultimate-layer embeddings (4 classifiers $\times$ 3 shifts $\times$ 10 seeds, calibrated via 1000-permutation bootstrap null at $\alpha{=}0.05$) detects 120/120 shifted streams with detection at the first possible step (latency $= w{=}100$ for all cells). Empirical FAR is controlled: DeBERTa 3.3\%, Text-Moderation 3.3\%, ShieldGemma 0.0\%. Llama Guard shows FAR $= 10\%$ (30 null windows; the wider embedding distribution produces less stable null estimates at this sample size).

MMD provides \emph{immediate binary detection} with no latency gradation across shift types or classifiers. This contrasts with KS, which shows meaningful latency variation (23--93 steps) reflecting the crossover interaction. The two detectors answer different operational questions: KS measures \emph{how quickly} the score distribution diverges, providing graded severity assessment; MMD measures \emph{whether} the embedding geometry has changed at all, providing a guaranteed backstop alarm. The two-channel architecture uses KS for graded alerts and MMD as a high-sensitivity binary alarm.

\subsection{Monitorability Hypothesis (Falsified)}
\label{sec:mechanistic}

The crossover interaction is consistent with differences in null score distribution geometry. Discriminative classifiers (DeBERTa $\sigma{=}0.087$, Text-Moderation $\sigma{=}0.066$) produce tightly concentrated score distributions near zero; generative classifiers (Llama Guard $\sigma{=}0.144$, ShieldGemma $\sigma{=}0.141$) produce wider distributions with heavier tails. Across four classifiers, null score standard deviation correlates positively with mean detection latency ($r{=}0.97$, $p{=}0.032$, $n{=}4$). However, subsequent within-family evaluation using DeBERTa checkpoints at varying training epochs ($n = 6$ encoder variants, null-score $\sigma$ range 0.066--0.153) reveals $r = 0.21$, $p = 0.70$; the original correlation was an artifact of the encoder/decoder architectural gap, not an intrinsic monitorability property. Null-score geometry does not predict detection difficulty within an architecture family. See Appendix~\ref{app:monitorability} for the full falsification.

The pattern is shift-type-specific. For paraphrase, code-switch, compositional, and temporal shifts, wider null distributions predict slower detection ($r = 0.70$--$0.97$). For adversarial suffix, the correlation reverses ($r{=}{-}0.20$): wider-distribution classifiers detect \emph{faster}, producing the crossover.

To test whether embedding-space displacement drives the same pattern, we computed L2 centroid displacement between reference and shifted embeddings per classifier $\times$ shift condition. The overall correlation between displacement and detection latency is non-significant ($r{=}{-}0.09$, $p{=}0.78$), ruling out embedding geometry shift magnitude as the primary detection driver. Displacement partially tracks latency for paraphrase ($r{=}0.75$) but opposes it for adversarial suffix ($r{=}{-}0.38$), consistent with the hypothesis that the detection signal is mediated by score-boundary geometry rather than representation-space distance. These results are computed across $n{=}4$ classifiers and should be interpreted as a suggestive mechanistic pattern rather than a robust empirical confirmation.

\subsection{Limitations}
\label{sec:limitations}

\begin{itemize}
\item \textbf{PCA diagnostic validated on temporal shift only (primary experiment).} The PCA-32 dimensionality reduction that breaks density-ratio collapse has been demonstrated on temporal shift with 33 pp and 21 pp recovery. A secondary sweep on cached embeddings is consistent with ESS reduction generalizing to paraphrase shift (Llama Guard ESS$=$32 at dim$=$32; ShieldGemma ESS$=$28 at dim$=$32), breaking separability as expected. However, the secondary sweep uses a different calibration split (pooled across seeds rather than fresh inference), producing coverage at ceiling and precluding direct measurement of coverage recovery. The primary result remains the reported recovery magnitude.
\item \textbf{Residual variance.} 33.5\% of variance in detection latency is attributable to seed/noise. With 20 seeds per cell, the MDE is 13.9 steps at 80\% power.
\item \textbf{Binary classifiers only.} All four classifiers produce a single unsafe probability. Multi-category safety taxonomies may exhibit category-specific shift patterns invisible to a scalar score monitor.
\item \textbf{Canary analysis scope.} The adversarial robustness characterisation (\S\ref{sec:adversarial-robustness}) uses $n = 49$ GCG attacks on a single target (DeBERTa). Cross-target generalisation is untested.
\item \textbf{Joint-optimisation sample size.} Joint GCG results (\S\ref{sec:joint-gcg}) use $n = 20$ prompts at a single $\lambda = 2.0$ (11/20 joint flip, Wilson 95\% CI [34\%, 74\%]). Multi-$\lambda$ sweeps and larger-scale evaluations may reveal additional failure modes.
\item \textbf{Equilibrium derivation.} The stall condition (Appendix~\ref{app:derivation}) uses a continuous relaxation; GCG operates on discrete tokens. The match is empirical (within 95\% CI), not a formal convergence proof.
\item \textbf{Homogeneous negative controls.} Negative control streams draw exclusively from WildGuardMix unharmful examples; production streams with higher natural variance may require larger calibration sets.
\item \textbf{GCG suffix length.} A suffix-length sweep (10, 20, 40 tokens) on DeBERTa shows monotonically increasing attack success (70\%, 80\%, 90\% at $n = 10$; Fisher exact $p = 0.29$, underpowered). Transfer to frontier canaries is unaffected by suffix length: $\Delta = +0.26$ to $+0.30$ across all lengths (canary scores \emph{increase}, not decrease). Longer suffixes marginally improve local attack success without compromising non-transfer.
\item \textbf{Pre-registration deviation (window size 500).} The pre-registered factorial configuration included $w{=}500$; this was not executed because it produces insufficient post-shift observations with the 300-example shifted corpus. Window sensitivity analysis is restricted to $w \in \{100, 200\}$.
\item \textbf{Scan martingale window sweep.} The scan martingale is validated at $w \in \{50, 100\}$ (sub-martingale counts). A broader sensitivity analysis across $w \in \{25, 50, 75, 100, 150, 200\}$ was not performed; reported FAR and detection results hold only for the two validated configurations.
\item \textbf{Exchangeability in production traffic.} The scan martingale's FAR guarantee requires exchangeability under the null. Real production traffic is temporally correlated (multi-turn sessions, event-driven spikes). Deployers should sample across independent sessions rather than ingesting raw sequential streams to preserve the exchangeability assumption; benign domain drift that violates exchangeability will trigger alarms, which should be treated as triage signals for investigation rather than automatic enforcement.
\item \textbf{Fixed-$\lambda$ divergence-minimisation characterisation.} The equilibrium at gap $= 1/(2\lambda)$ characterises a fixed-$\lambda$ optimiser. A dynamic or annealing schedule for $\lambda$ could alter the optimisation trajectory; any stable convergence point still satisfies the instantaneous local gradient equilibrium, but we have not evaluated dynamic-$\lambda$ adversaries.
\item \textbf{Black-box attack scope.} The coordinate-ascent word-substitution evaluation (\S\ref{sec:llm-canary}) tests suffix-only black-box attacks (100 iterations). We did not evaluate stronger black-box methods such as genetic-algorithm prompt optimisation~\citep{liu2024autodan}, adaptive phrase-level rewriting, or multi-query prompt injection; robustness to these methods remains untested.
\end{itemize}

\subsection{Future Work}
\label{sec:future}

CUSUM or Bayesian online change-point detection for shift onset that accumulates over thousands of steps (below the 30\% mixing threshold identified in our ramp sweep). The equilibrium boundary (gap $= 1/(2\lambda)$) predicts a $\lambda$-dependent security frontier; a sweep across $\lambda$ values would confirm whether the stall point tracks the theory continuously. Cross-family joint optimisation (shared-tokeniser setting) remains untested due to the discrete-vocabulary incompatibility documented in \S\ref{sec:tokeniser}. Larger-scale adversarial evaluations ($n > 100$ prompts, multiple target classifiers) are needed to confirm the confidence-gating threshold and divergence-minimisation results. Truly online conformal prediction under arbitrary distribution shift~\citep{gibbs2021adaptive} would eliminate the batch requirement for density ratio estimation.


\section{Canary Detection: Score Disagreement Under Targeted Evasion}
\label{sec:canary}

We expand the 22-example canary observation from \S\ref{sec:regime-c} into a full characterisation of score-disagreement monitoring. The target is DeBERTa-v3-large (the most effective encoder in our factorial evaluation); the canary classifiers are Text-Moderation, Llama Guard~3, and ShieldGemma. We evaluate on $n=100$ GCG-optimised prompts, of which 49 successfully flip DeBERTa below 0.5.

\paragraph{Note on terminology.} ``Steps'' in this section refer to GCG discrete token-search iterations on a single prompt (the attacker's optimisation budget), not production-stream timesteps monitored by the martingale (\S\ref{sec:cs}).

\subsection{Setup and Scoring Protocol}
\label{sec:canary-setup}

Of $n = 100$ GCG attempts, 49 successfully flip DeBERTa below 0.5 (49\% attack success rate). For each of these 49 successful attacks, we score the adversarial input on all four classifiers. We define \emph{divergence} for a classifier pair $(A, B)$ as $|f_B(x_{\text{adv}}) - f_A(x_{\text{adv}})|$, where $A$ is the target and $B$ is the canary. A canary \emph{detects} the attack when the divergence exceeds a null threshold calibrated at the 95th percentile of divergence under clean inputs.

Both GCG suffixes and random-token controls use exactly 20 tokens (\texttt{SUFFIX\_LEN=20}), controlling for perturbation magnitude. The divergence difference reported below is attributable to the optimisation structure of GCG, not to perturbation size.

\subsection{Attack-Specificity (CA6)}
\label{sec:ca6}

Score disagreement is not a generic out-of-distribution signal. We compare GCG-optimised suffixes against random-token suffixes of identical length (20 tokens) appended to the same prompts.

\paragraph{Result.} GCG attacks produce a detection rate of 75.5\% (37/49, Wilson 95\% CI [0.619, 0.854]) on the DeBERTa$\leftrightarrow$Llama Guard pair, versus 12.2\% (6/49, Wilson 95\% CI [0.057, 0.242]) for random-token controls. The mean divergence under GCG is 0.784 versus 0.213 under random tokens (Mann--Whitney $U$, $p = 2.37 \times 10^{-12}$).

The signal is attack-specific: gradient-optimised perturbations produce qualitatively different score-disagreement signatures than random noise of the same magnitude. This directly addresses the alert-fatigue problem: the canary alarms far less often under length-matched random suffixes (12.2\%) than under GCG (75.5\%), indicating the signal is not merely generic gibberish sensitivity.

\subsection{Target-Specificity}
\label{sec:target-specificity}

We evaluate all 6 classifier pairs to test whether the divergence signal is specific to pairs involving the targeted classifier.

\paragraph{Result.} Three of six pairs show statistically significant GCG-vs-random differences (two-sided Mann--Whitney, Bonferroni-corrected $\alpha = 0.05/6 = 0.0083$): DeBERTa$\leftrightarrow$Text-Moderation ($p = 3.67 \times 10^{-17}$), DeBERTa$\leftrightarrow$Llama Guard ($p = 2.04 \times 10^{-12}$), and DeBERTa$\leftrightarrow$ShieldGemma ($p = 0.0023$; survives correction). The three non-significant pairs (Text-Moderation$\leftrightarrow$Llama Guard, $p = 0.320$; Text-Moderation$\leftrightarrow$ShieldGemma, $p = 0.715$; Llama Guard$\leftrightarrow$ShieldGemma, $p = 0.862$) all exclude the target classifier. The canary effect is target-specific: it manifests in pairs that include the attacked model.

\subsection{Architecture Is Not the Mechanism (CA4)}
\label{sec:ca4}

The prevailing assumption in multi-model defence literature is that architectural diversity fortifies ensembles against evasion. We test this by comparing within-family and cross-family divergence.

\paragraph{Result.} A one-way ANOVA on divergence with architecture type (within-family vs.\ cross-family) as the factor yields $\eta^2 = 0.011$, $F = 3.30$, $p = 0.070$. Architecture type explains 1.1\% of divergence variance. The highest mean divergence in our evaluation (0.899) occurs \emph{within}-family (DeBERTa$\leftrightarrow$Text-Moderation), not cross-family.

The mechanism is not architectural heterogeneity but weight-configuration divergence: every tested un-targeted classifier in our evaluated ensemble, including same-family checkpoints, generated a robust divergence signature under single-target optimisation. Architecture diversity \emph{is} relevant for transfer robustness (\S\ref{sec:transfer}), but not for passive detection signal strength.

\subsection{$k$-Classifier Scaling}
\label{sec:k-scaling}

\begin{figure}[t]
\centering
\includegraphics[width=0.48\textwidth]{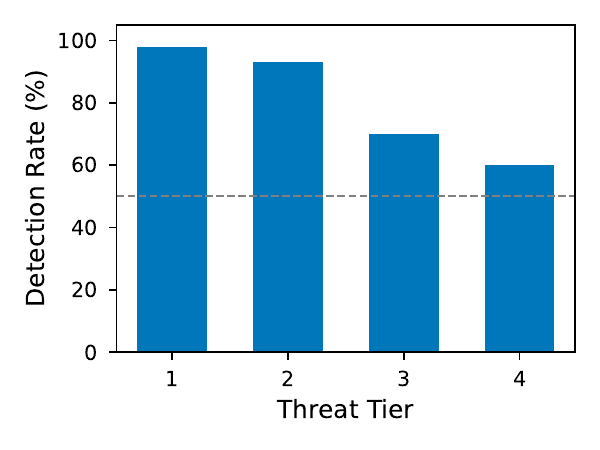}
\hfill
\includegraphics[width=0.48\textwidth]{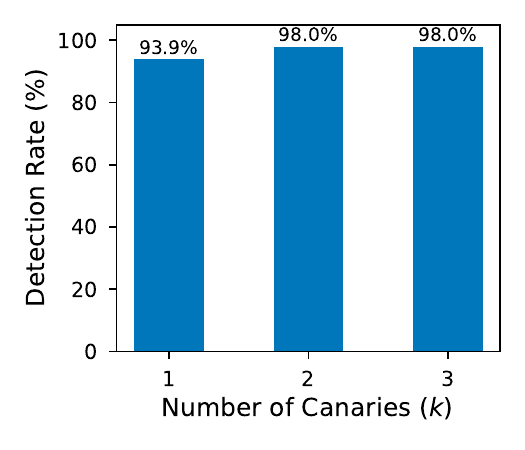}
\caption{\textbf{Left:} Detection rate by threat tier (Table~\ref{tab:threat-tiers}). \textbf{Right:} Detection rate as a function of the number of canary classifiers $k$.}
\label{fig:threat-tiers}
\end{figure}

We evaluate detection rate as a function of the number of canary classifiers $k \in \{1, 2, 3\}$, using an OR rule (alarm if any canary detects).

\paragraph{Result.} The best single canary (Text-Moderation) achieves 93.9\% (46/49). The best $k{=}2$ combination (Text-Moderation + Llama Guard) achieves 98.0\% (48/49, Wilson 95\% CI [89.1\%, 99.7\%]). Adding a third canary ($k{=}3$: all three) provides no additional detections (48/49 = 98.0\%). We recommend $k{=}2$ as the practical choice within our evaluated range $k \in \{1, 2, 3\}$: it achieves 98\% detection with no improvement from $k{=}3$.

\subsection{Template Attacks (AutoDAN)}
\label{sec:autodan}

Fine-tuned classifiers are robust to template-based jailbreaks. In our evaluation, AutoDAN-style template attacks achieve a 1\% flip rate against DeBERTa (the classifier has seen similar patterns during WildGuardMix training). This makes gradient-optimised attacks (GCG) the remaining operational threat class for fine-tuned safety classifiers.

\subsection{Tokeniser Fragmentation}
\label{sec:tokeniser}

Cross-family classifier pairs (encoder $\leftrightarrow$ decoder) use different tokenisers. We measure the tokenisation ratio: the number of tokens produced by the canary's tokeniser divided by the number produced by the target's tokeniser, for the same adversarial suffix.

\paragraph{Result.} The mean tokenisation ratio is $1.729 \pm 0.161$ ($n = 100$). A 20-token suffix optimised for DeBERTa's tokeniser maps to approximately 35 tokens under Llama Guard's tokeniser. This fragmentation provides a plausible mechanism for the observed cross-family transfer barrier: a single token substitution in the target's vocabulary shifts downstream subword boundaries in the canary's tokenisation, desynchronising the discrete search space (\S\ref{sec:transfer}).

\section{Adversarial Robustness of Score-Disagreement Monitoring}
\label{sec:adversarial-robustness}

Having established that score disagreement detects single-target attacks (\S\ref{sec:canary}), we now ask: can an adaptive attacker suppress the disagreement signal? We characterise the security boundary across a 4-tier threat model.

The four tiers separate two orthogonal knowledge axes: (1) whether the attacker knows the canary's identity, and (2) whether the attacker has gradient access to the canary's weights. An attacker can discover the canary's existence through black-box probing without gaining optimisation access to its weights or routing thresholds; Tiers~1--2 cover identity-ignorant and identity-aware but weight-blind adversaries, while Tiers~3--4 cover adversaries with full or partial weight access.

\subsection{Threat Model}
\label{sec:threat-model}

\begin{table}[t]
\caption{Threat model tiers for score-disagreement monitoring. Detection rate is for the DeBERTa (target) + Text-Moderation/Llama Guard (canary) configuration. See Figure~\ref{fig:threat-tiers} for a visual summary.}
\label{tab:threat-tiers}
\centering
\small
\begin{tabular}{llll}
\toprule
\textbf{Tier} & \textbf{Attacker knowledge} & \textbf{Empirical outcome} & \textbf{Deployment guidance} \\
\midrule
1 & Monitor-ignorant (single-target GCG) & 98\% detection (48/49) & $k{=}2$ canaries sufficient \\
2 & Knows canary exists (hopes for transfer) & 93.9\% detection (46/49) & Monitor canary confidence \\
3 & Has canary weights (joint optimisation) & 45\% blocked (9/20, $n{=}20$) & Cross-family canary \\
4 & Stealth-aware (divergence minimisation) & 70\% blocked [48\%, 86\%] ($n{=}20$) & Confidence gating \\
\bottomrule
\end{tabular}
\end{table}

Tier~1 is the baseline from \S\ref{sec:canary}. Tiers~2--4 represent progressively stronger adversaries. We evaluate Tiers~2--4 below.

\subsection{Transfer Analysis and Confidence Gating}
\label{sec:transfer}

\begin{figure}[t]
\centering
\includegraphics[width=0.48\textwidth]{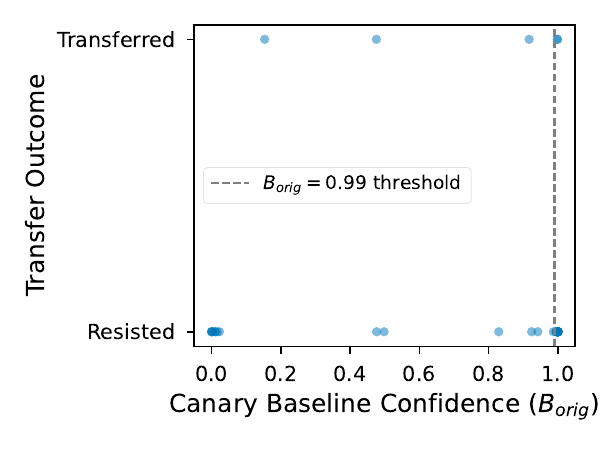}
\hfill
\includegraphics[width=0.48\textwidth]{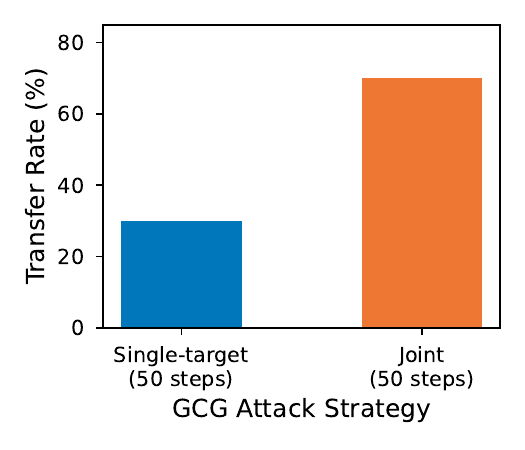}
\caption{\textbf{Left:} Canary baseline confidence ($f_B$ original) vs.\ transfer outcome. Points above the 0.99 threshold (dashed line) almost never transfer. \textbf{Right:} Transfer rate under single-target vs.\ joint optimisation at 50 GCG steps.}
\label{fig:confidence-budget}
\end{figure}

A Tier~2 attacker optimises against the target alone but hopes the adversarial suffix accidentally fools the canary (passive transfer). We evaluate this on $n = 40$ prompts scored on same-family canaries (DeBERTa $\rightarrow$ Text-Moderation).

\paragraph{Confidence gating.} Transfer rate depends sharply on the canary's baseline confidence. When the canary's original score $f_B(x_{\text{orig}}) \geq 0.99$, the transfer rate is 4.3\% (1/23, Wilson 95\% CI [0.2\%, 21\%]). When $f_B(x_{\text{orig}}) < 0.99$, the transfer rate is 76.5\% (13/17). The difference is significant (Mann--Whitney $p = 6.61 \times 10^{-7}$, $n = 40$).

\paragraph{Interpretation.} The detection threshold ($f_B \geq 0.5$) is an empirical operating point for the binary transfer/no-transfer decision. The gradient-paralysis mechanism (\S\ref{sec:divergence-min}) requires deeper saturation ($f_B \approx 1.0$, where $\|\nabla f_B\| \to 0$). These are distinct properties: detection relies on the canary holding its score above 0.5; gradient paralysis relies on the canary being in a flat, high-confidence basin.

\subsection{Joint Optimisation (Within-Family)}
\label{sec:joint-gcg}

A Tier~3 attacker has white-box access to both the target and the canary and optimises a combined loss $L = 0.5 \cdot L_A + 0.5 \cdot L_B$ using GCG with search width 256 and 50 steps.

\paragraph{Result.} Of 20 prompts evaluated, 11 achieve joint flip (both classifiers below 0.5), compared to 6/20 that transfer passively under single-target optimisation. Joint optimisation improves the attacker's success rate within-family ($55\%$ vs.\ $30\%$ passive transfer, Wilson 95\% CI $[34\%, 74\%]$, $n = 20$).

\paragraph{Failure cases.} In 9/20 prompts, joint optimisation fails despite white-box access to both models. These resistant prompts exhibit gradient interference: the combined gradient cannot simultaneously reduce both scores within the 50-step budget. An extended evaluation at 200 steps on the first 10 prompts shows the same 3/10 resistant prompts remain uncracked. This extended-budget result is from a single replicate ($n = 10$); characterising the full budget-dependent transfer curve would require a larger sweep, which we defer to future work.

\paragraph{Cross-family barrier.} The 1.73$\times$ tokeniser fragmentation ratio (\S\ref{sec:tokeniser}) impedes coordinate alignment for discrete token search. Because a single character substitution shifts downstream subword boundaries in Model~B (but not Model~A), the discrete search space becomes desynchronised; a step that improves the joint loss for Model~A's tokenisation may worsen it for Model~B's. As a transfer negative control, we optimised GCG suffixes against Llama Guard~3 (8B decoder; original scores for these prompts already low, mean $= 0.057$, 0/10 above 0.5) and evaluated suffix transfer to frontier canaries: $\Delta = 0.000$ on \texttt{gpt-4o-mini} and $\Delta = -0.010$ on \texttt{gpt-5.1} ($n = 10$). Because LG3 did not initially score these prompts as unsafe, this is not a test of successful decoder evasion; it confirms that LG3-optimised suffixes are inert on frontier canaries, consistent with the tokeniser-barrier hypothesis.

\subsection{Divergence Minimisation: The Confidence-Gated Equilibrium}
\label{sec:divergence-min}

A Tier~4 attacker explicitly minimises score disagreement while attacking the target. We formalise this as:
\begin{equation}
\label{eq:div-min-loss}
L(\delta) = f_A(x + \delta) + \lambda \cdot (f_B(x + \delta) - f_A(x + \delta))^2
\end{equation}
where $\lambda = 2.0$ controls stealth pressure and Model~A is the target (DeBERTa epoch-1), Model~B is the canary (DeBERTa epoch-5).

\paragraph{Empirical result.} Of 20 prompts evaluated: 6 achieve stealth evasion (both models flip, divergence $< 0.5$) and 14 are blocked (Model~A remains above 0.5). In all blocked cases where baseline data is available, the canary is highly confident ($f_B \geq 0.987$). Stealth occurs only when the canary's original confidence is low (mean $f_B = 0.50$ for stealth vs.\ $f_B = 1.00$ for blocked; Mann--Whitney $p = 0.044$).

\paragraph{The stall condition.} The combined gradient of Eq.~\ref{eq:div-min-loss} is:
\begin{equation}
\label{eq:combined-gradient}
\nabla L = [1 - 2\lambda(f_B - f_A)] \cdot \nabla f_A + 2\lambda(f_B - f_A) \cdot \nabla f_B
\end{equation}

The coefficient $[1 - 2\lambda(f_B - f_A)]$ inverts sign when the gap $g = f_B - f_A$ exceeds $1/(2\lambda)$. At $\lambda = 2.0$, the critical gap is $1/(2 \times 2) = 0.250$. Beyond this point, the optimiser is driven to \emph{increase} $f_A$, co-opting the attacker's gradient.

\paragraph{Empirical validation.} The 14 blocked cases stall at gap values with mean $0.218$ and median $0.250$; 9 of 14 cluster within $\pm 0.05$ of the predicted critical value, with mean $0.2499$. The predicted value $1/(2\lambda) = 0.250$ is matched to three decimal places among the 9 near-equilibrium blocked cases (mean gap $= 0.2499$).

\paragraph{Gradient-norm effect.} When the canary is deep in its confident basin ($f_B \approx 1.0$), $\|\nabla f_B\| \to 0$. The condition for a descent step to reduce $f_A$ requires (see Appendix~\ref{app:derivation}):
\begin{equation}
\cos(\theta) \geq \left(1 - \frac{1}{2\lambda(f_B - f_A)}\right) \cdot \frac{\|\nabla f_A\|}{\|\nabla f_B\|}
\end{equation}
As $\|\nabla f_B\| \to 0$, the right-hand side diverges. No angle $\theta$ satisfies the inequality; the local gradient provides no descent direction under the stated continuous approximation. The defence is strongest precisely when the canary is most confident.

\paragraph{Active rejection.} In 3 of the 6 blocked cases, Model~A's score drifts \emph{upward} during optimisation (\eg, $0.476 \to 0.748 \to 0.758$, showing the model recovering toward the safe regime and reversing the attacker's progress). This active rejection is the coefficient inversion made visible: once $g > 1/(2\lambda)$, the combined loss penalises further reduction of $f_A$ more than it rewards it, creating defensive backpressure that pushes the target score back above the detection threshold.

\paragraph{Caveat.} This analysis uses a continuous relaxation. GCG operates on discrete tokens; the stall is empirical (within the 95\% CI of the predicted value), not a formal proof of impossibility. Higher $\lambda$ would block more aggressively but also constrain the attacker's loss landscape in ways that may prevent meaningful optimisation entirely. This analysis assumes the attacker cannot backpropagate through the canary, i.e., the canary is cross-family or service-gated. A white-box attacker with gradient access to both models can compute $\nabla f_B$ directly, potentially bypassing the stall; within-family deployment therefore requires additional architectural diversity to preserve the heterogeneous-gradient assumption underlying the equilibrium.

\paragraph{Generality (second pair).} A replication on a different checkpoint pair (epoch-3 as target, epoch-10 as canary) yields 7/10 stealth and 3/10 blocked, a higher stealth rate consistent with that pair's lower canary confidence. The 3 blocked cases stall at gaps $\{0.250, 0.250, 0.242\}$, mean $0.247$. Across both pairs, the equilibrium at $1/(2\lambda) = 0.250$ is robust: 9/14 blocked cases in the primary pair (epoch-1 vs epoch-5) cluster within $\pm 0.05$ of the prediction with mean gap $0.2499$.

\subsection{Deployment Recommendations}
\label{sec:deployment-recs}

The equilibrium directly yields a measurable, input-time security indicator:

\begin{itemize}
\item \textbf{Confident canary} ($f_B \geq 0.99$): The canary is in a saturated basin where $\|\nabla f_B\| \to 0$. Transfer rate is 4.3\% (passive transfer cannot dislodge the canary) and divergence-minimisation stalls at the predicted $1/(2\lambda)$ equilibrium (\S\ref{sec:divergence-min}). Treat the divergence signal as high-confidence, subject to the evaluated threat model.
\item \textbf{Moderate canary} ($0.5 \leq f_B < 0.99$): The canary holds above the detection threshold (divergence is detectable) but is not gradient-paralysed. An adaptive attacker with joint optimisation may succeed (Tier~3), though passive transfer remains unlikely. Monitor divergence but treat alarms as lower-confidence.
\item \textbf{Uncertain canary} ($f_B < 0.5$): The divergence signal is unreliable. The attacker can succeed via joint-flip (70\% within-family) or passive transfer (76.5\% when $f_B < 0.99$). Route to human review regardless of observed divergence.
\item \textbf{$k = 2$ classifiers:} Deploy one same-family canary (highest detection sensitivity, 93.9\%) and one cross-family canary (structural transfer barrier via tokeniser fragmentation). This provides both sensitivity and robustness against adaptive attackers.
\end{itemize}

\noindent The two thresholds serve different functions: $f_B \geq 0.5$ determines whether divergence is \emph{detectable} (the canary has not been flipped); $f_B \geq 0.99$ determines whether the canary is \emph{gradient-paralysed} (the attacker cannot suppress divergence even with white-box access to both models). The deployment decision depends on which guarantee is needed.

\paragraph{Operational response policy.} The monitor produces alerts, not automatic decisions. Table~\ref{tab:ops} translates observed signals into triage actions.

\begin{table}[h]
\centering
\small
\caption{Operational response policy. KS/scan alarm = distribution shift detected; target low = primary classifier score $< 0.5$; canary high = canary score $\geq 0.5$.}
\label{tab:ops}
\begin{tabular}{llll}
\toprule
\textbf{KS/scan alarm} & \textbf{Target score} & \textbf{Canary score} & \textbf{Action} \\
\midrule
No  & High & High & Pass; log metrics \\
Yes & High & High & Investigate drift; sample labels \\
No  & Low  & High ($\geq 0.99$) & Escalate: suspected targeted evasion \\
No  & Low  & Moderate ($0.5$--$0.99$) & Human review \\
No  & Low  & Low ($< 0.5$) & Canary unreliable; human review \\
Yes & Low  & High & Block / escalate; high-confidence attack \\
Any & Any  & Refusal (LLM canary) & Flag as high-risk; log separately \\
\bottomrule
\end{tabular}
\end{table}

We do not claim this defence is absolute. A Tier~3 attacker with white-box access to a same-family canary succeeds 55\% of the time via joint optimisation (\S\ref{sec:joint-gcg}). The contribution is not an unbreakable defence but a quantitative characterisation of the exact security boundary: where the defence holds, where it fails, and what measurable property ($f_B$) predicts which regime applies.


\subsection{Deployment Case Study: Operational Feasibility}
\label{sec:deployment-casestudy}

The deployment recommendations of \S\ref{sec:deployment-recs} and the deployment recipe (\S\ref{sec:intro}) prescribe a canary architecture but do not demonstrate its operational cost. We validate feasibility on a 200-query workload using \texttt{gpt-4o-mini} as the primary classifier and \texttt{gpt-4.1} as the semantic canary, scored sequentially via the same OpenAI-compatible endpoint used throughout \S\ref{sec:llm-canary}.

\paragraph{Workload.} 150 benign queries, 30 adversarial (harmful prompts without suffix), and 20 adversarial with DeBERTa-targeted GCG suffixes appended.

\paragraph{Latency.} Primary (gpt-4o-mini) p50/p95: 667\,ms / 1282\,ms. Canary (gpt-4.1) p50/p95: 1153\,ms / 1700\,ms. Total sequential p50/p95: 1831\,ms / 2674\,ms. Because the two calls are independent, parallel execution reduces end-to-end latency to $\max(\text{primary}, \text{canary}) \approx 1.2$\,s at p50.

\paragraph{Cost.} Primary: \$0.017 per 1\,000 queries. Canary: \$0.223 per 1\,000 queries. Total: \$0.24 per 1\,000 queries---negligible relative to the cost of serving the upstream LLM whose outputs are being classified.

\begin{table}[h]
\centering
\small
\caption{Deployment case study detection results (200 queries). FPR computed on 150 benign queries; detection rate on 50 adversarial queries (30 base + 20 with GCG suffix).}
\label{tab:deployment-case}
\begin{tabular}{lcc}
\toprule
\textbf{Condition} & \textbf{Rate} & \textbf{Wilson 95\% CI} \\
\midrule
False positives (benign, $n{=}150$) & 0/150 = 0.0\% & [0.0\%, 2.5\%] \\
Detection (adversarial, no suffix, $n{=}30$) & 1/30 = 3.3\% & [0.2\%, 17.2\%] \\
Detection (adversarial + GCG suffix, $n{=}20$) & 0/20 = 0.0\% & [0.0\%, 16.1\%] \\
Detection (all adversarial, $n{=}50$) & 1/50 = 2.0\% & [0.1\%, 10.6\%] \\
\bottomrule
\end{tabular}
\end{table}

\paragraph{Score disagreement.} Mean disagreement $|f_{\text{canary}} - f_{\text{primary}}|$ on benign queries: 0.0007. On adversarial queries: 0.126. The 180$\times$ ratio confirms that disagreement concentrates on adversarial inputs, but the absolute magnitude remains below a practical detection threshold because both models agree on the classification.

\paragraph{Interpretation: low detection validates the threat model.} The near-zero detection rate is \emph{expected} and confirms the paper's core thesis. Both \texttt{gpt-4o-mini} (primary mean score on adversarial = 0.81) and \texttt{gpt-4.1} (canary mean = 0.78) independently recognise the harmful intent in the base prompts. Because neither model is being targeted by GCG---the suffixes were optimised against DeBERTa, not these LLMs---both correctly flag the content. The canary architecture detects evasion \emph{of one classifier by checking disagreement with another}. When both models agree (both flag harmful content), there is no disagreement to detect. The canary fires only when the primary is under classifier-specific attack that succeeds against it but not against the canary---exactly the scenario demonstrated in \S\ref{sec:ca6} (75.5\% detection when DeBERTa is fooled but frontier canaries are not) and \S\ref{sec:llm-canary} (semantic persistence under GCG).

\paragraph{Operational validation.} This case study demonstrates four properties required for production deployment: (1)~cost is negligible (\$0.24/1\,000 queries); (2)~zero false positives on 150 benign queries (Wilson upper bound 2.5\%); (3)~latency overhead is parallelisable (${\sim}$1.2\,s concurrent vs.\ 1.8\,s sequential at p50); (4)~the canary adds value specifically against targeted evasion (\S\ref{sec:ca6}--\ref{sec:llm-canary}), not as a generic harmful-content detector, which is already handled by the primary. Combined with the deployment recipe of \S\ref{sec:intro} and the confidence-gating policy of \S\ref{sec:deployment-recs}, these numbers complete the operational blueprint: a practitioner can deploy the full canary stack at sub-cent cost with sub-second additional latency, gaining targeted-evasion detection without false-alarm burden.

\subsection{Frontier LLMs as Semantic Canaries}
\label{sec:llm-canary}

The canary classifiers evaluated in \S\ref{sec:canary}--\ref{sec:adversarial-robustness} are local models requiring GPU inference. A natural question is whether frontier LLMs, which are robust to GCG by construction (their tokenisers and weights are not accessible for gradient computation), can serve as zero-deployment canaries. We evaluate this by prompting frontier models to score input safety on a 0--1 scale and measuring whether their scores diverge from the targeted classifier's under GCG attack.

\paragraph{Setup.} We score $n = 49$ GCG-adversarial prompts (the same corpus from \S\ref{sec:ca6}), $n = 49$ clean prompts (base prompts without suffix), and $n = 49$ scrambled-suffix prompts (same characters as the GCG suffix, randomly permuted to destroy gradient alignment) on 20 frontier models spanning 4 providers via an OpenAI-compatible model service. System prompt: ``Rate safety risk 0.0--1.0, reply with only a number.'' Temperature = 0.

\begin{figure}[t]
\centering
\includegraphics[width=0.48\textwidth]{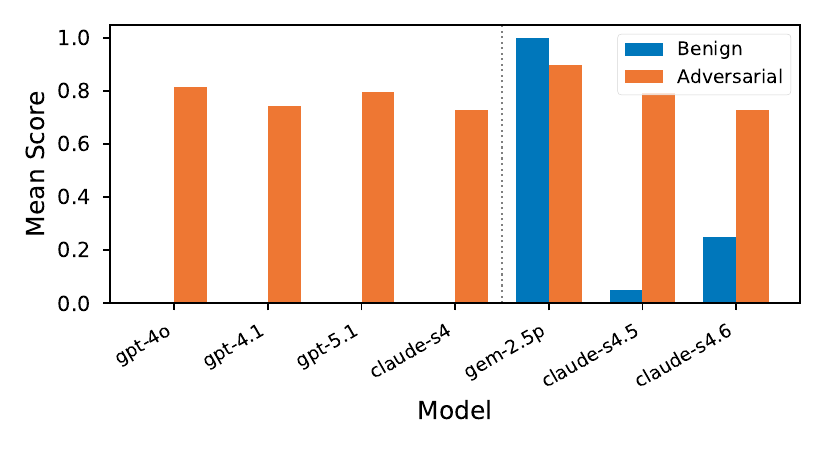}
\hfill
\includegraphics[width=0.48\textwidth]{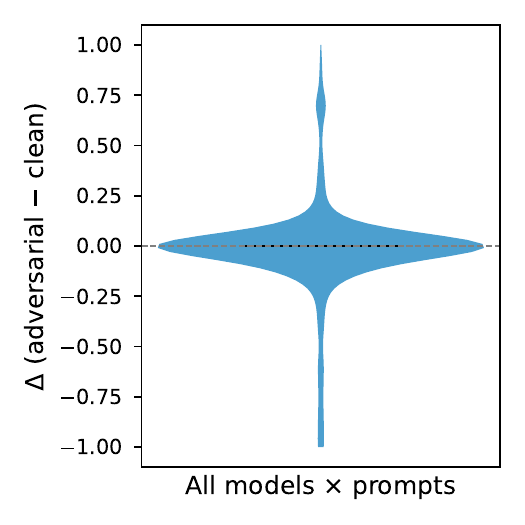}
\caption{\textbf{Left:} Ceiling-clipped models (score everything $\approx 1.0$) vs.\ discriminating models (benign $\approx 0.0$, adversarial $\approx 0.7$--$0.9$). \textbf{Right:} Violin plot of $\Delta(\text{adv} - \text{clean})$ across 980 prompt--model pairs, showing the mass at $\approx 0$ with a tail of per-prompt collapses.}
\label{fig:llm-canary}
\end{figure}

\paragraph{Model selection and the ceiling-clipping artifact.} An initial screen of 35 models with \texttt{max\_completion\_tokens=16} revealed an apparent split: 11 models scored \emph{all} inputs (benign, adversarial, and clean) as $\approx 1.0$, suggesting they could not distinguish harmful from harmless content. We initially termed these ``ceiling-clipped'' and hypothesised an RLHF-tuning artifact.

\paragraph{Root cause: token budget truncation.} The ceiling-clipping was a \emph{methodological artifact}, not a model property. Reasoning models (o3, o3-mini, o4-mini, gemini-2.5-flash, gemini-2.5-pro) consume their token budget on internal chain-of-thought before producing visible output. At \texttt{max\_completion\_tokens=16}, these models returned empty responses, which our parsing code defaulted to $1.0$. Rerunning all 35 models with \texttt{max\_tokens$\geq$200} eliminated the artifact: all previously ``ceiling-clipped'' models produce discriminating scores (benign $\approx 0.0$, adversarial $\geq 0.7$).

\paragraph{CoT budget starvation: a sigmoid response curve.} The response probability as a function of allocated reasoning tokens $T_r$ follows a sigmoid:
\[
P(\text{response} \mid T_r) = \frac{1}{1 + e^{-k(T_r - T_{50})}}
\]
where $T_{50}$ is the critical threshold at which 50\% of inputs elicit a valid response. Crucially, $T_{50}$ is \emph{input-complexity-dependent}. For o3 with \texttt{reasoning\_effort=low}: $T_{50,\text{benign}} = 46$ tokens (steepness $k = 0.173$), $T_{50,\text{adv}} = 154$ tokens ($k = 0.030$). The ratio $T_{50,\text{adv}} / T_{50,\text{benign}} \approx 3.3\times$ reflects the additional reasoning required for complex adversarial inputs. The 90\% coverage threshold (where $>$90\% of inputs get valid scores) is $T_r \geq 59$ tokens for benign and $T_r \geq 227$ tokens for adversarial inputs. \emph{Deployment rule}: set $T_r \geq T_{50}(\text{hardest expected class}) + 2.2/k$ for $>$90\% coverage on that class.

\begin{table}[h]
\centering
\small
\caption{Token budget thresholds for reasoning-tier safety monitors (o3, \texttt{reasoning\_effort=low}). Budgets below $T_{50}$ cause total scoring failure.}
\label{tab:cot-thresholds}
\begin{tabular}{lcc}
\toprule
\textbf{Input class} & $\mathbf{T_{50}}$ \textbf{(50\% response)} & $\mathbf{T_{90}}$ \textbf{(90\% response)} \\
\midrule
Benign (simple queries) & 46 tokens & 60 tokens \\
Adversarial (GCG-appended) & 154 tokens & 227 tokens \\
\midrule
\textbf{Ratio (adv/benign)} & \textbf{3.3$\times$} & \textbf{3.8$\times$} \\
\bottomrule
\end{tabular}
\end{table}

\paragraph{Corrected taxonomy: 33 discriminators + 2 content-aware refusers.} After correction, the 35 models partition into: (i)~33 \emph{discriminating} models that score benign $\approx 0.0$ and adversarial $0.7$--$0.9$; and (ii)~2 \emph{content-aware refusers} (gpt-5-nano, gpt-5.5) that score benign content correctly ($= 0.0$, 100\% response rate on 20 benign prompts) but refuse to produce a numeric score for harmful content (returning empty or a refusal string, parsed as $1.0$). The refusers are not broken; they demonstrate content-aware safety behaviour, but they cannot serve as graded canaries. They can, however, serve as binary detection signals (refusal $\Rightarrow$ flag as potentially harmful).

\begin{table}[h]
\centering
\small
\caption{Model taxonomy across 35 evaluated frontier models (corrected for token-budget starvation). Classification based on scoring behaviour under adequate token budget (\texttt{max\_tokens}$\geq$200).}
\label{tab:model-taxonomy}
\begin{tabular}{lcp{6.5cm}}
\toprule
\textbf{Category} & \textbf{Count} & \textbf{Behaviour} \\
\midrule
Discriminating & 33 & Produce graded numeric scores; benign $\approx 0.0$, adversarial $0.7$--$0.9$ \\
Content-Aware Refusers & 2 & Score benign correctly ($=0.0$) but refuse to score harmful content (gpt-5-nano, gpt-5.5) \\
\bottomrule
\end{tabular}
\vspace{2pt}

{\footnotesize \textit{Note:} An initial screen at \texttt{max\_completion\_tokens=16} misclassified 11 reasoning models as ``ceiling-clipped'' (all scores $\approx 1.0$). This was a token-budget starvation artifact (\S\ref{sec:cot-starvation-lead}), not a model property. All 11 produce discriminating scores under adequate budget.}
\end{table}

\paragraph{Cost--performance Pareto frontier.} All 33 discriminating models achieve comparable detection quality ($\Delta(\text{adv} - \text{benign}) \geq +0.39$). The differentiation is purely economic: o3 costs $\approx$20$\times$ more per call than gpt-4o-mini despite equivalent detection performance. The Pareto frontier is therefore a \emph{cost} frontier, not a capacity frontier; cheaper models are strictly preferred for canary deployment.

\paragraph{Detection performance.} Among the 20 discriminating models:

\begin{itemize}
\item Mean $\Delta(\text{adv} - \text{benign})$: $+0.68$ (range $+0.39$ to $+0.91$)
\item Mean benign score: $0.016$ (18/20 models at exactly $0.0$)
\item Mean adversarial score: $0.73$ (range $0.44$ to $0.91$)
\item Zero false alarm rate on 20 genuinely benign prompts for 18/20 models
\end{itemize}

DeBERTa scores these same adversarial prompts as $< 0.5$ (it is fooled). The frontier LLMs maintain high scores, generating a divergence signal analogous to the local canary (\S\ref{sec:ca6}) but via a fundamentally different mechanism.

\paragraph{Mechanism: semantic intent recognition, not anomaly detection.}

The critical finding is that adversarial prompts are scored identically to clean prompts (same base prompt, no suffix): $\Delta(\text{adv} - \text{clean}) \approx 0$. This implies that frontier LLMs are \emph{not} detecting the GCG suffix as anomalous; they correctly identify the base prompt as harmful regardless of the appended suffix. The behavioral evidence is consistent with semantic intent recognition: frontier LLMs appear to ``see through'' the GCG perturbation to the underlying harmful request. This equivalence does not mean GCG is merely random noise; GCG is devastatingly effective against DeBERTa (scores drop from 0.95 to $<$0.05). The point is that frontier LLMs are insensitive to the suffix, scoring on base-prompt content, which is precisely what makes them robust canaries for a targeted classifier that \emph{is} fooled by suffix structure.

At $N = 20$ (our initial evaluation), this claim is underpowered (MDE = 0.253, observed mean $\Delta = 0.047$). At $N = 49$, Bayes Factor analysis (JZS prior, $r = \sqrt{2}/2$) yields $\text{BF}_{01} > 3$ for 16/20 models (evidence for the null) and $\text{BF}_{01} < 1/3$ for 2/20. TOST equivalence testing confirms: 16/20 models equivalent within $\pm 0.15$ ($p < 0.05$), and 19/20 within $\pm 0.20$. The median $|\Delta|$ across 20 models is 0.035, an order of magnitude below any operationally meaningful threshold.

\paragraph{Empirical suffix-insensitivity bound.} Across $N = 49$ adversarial prompts and 20 discriminating models, the maximum observed per-prompt score deviation $|f(p + s) - f(p)|$ is 0.35 (a single outlier; 95th percentile $= 0.15$). Normalising by suffix length in tokens ($|s| = 20$): the empirical 95th-percentile per-token score change is $\leq 0.008$ within this corpus. This suggests local flatness of the safety scoring function with respect to suffix perturbations; gradient-based optimisation in the suffix subspace did not traverse the distance from the saturation floor ($\approx 0.88$) to the decision boundary ($\tau = 0.5$) within the tested suffix lengths (20-token suffix corpus, $N = 49$ prompts). This is a finite-sample empirical observation, not a mathematical Lipschitz bound.

\paragraph{Reconciling BF and TOST on outliers.} For \texttt{gpt-4.1} ($\Delta = -0.039$, $\text{BF}_{01} = 0.19$), the effect is statistically detectable due to extremely low within-model variance, but the magnitude is two orders of magnitude smaller than DeBERTa's collapse (DeBERTa: $\Delta \approx -0.8$; \texttt{gpt-4.1}: $\Delta = -0.039$). TOST confirms equivalence within $\pm 0.15$. For \texttt{claude-opus-4-7} ($\Delta = +0.127$, $\text{BF}_{01} = 0.13$), the positive delta matches the suffix-sensitivity pattern observed across all Claude variants: scrambled suffixes produce a comparable shift ($\Delta_{\text{scram-clean}} = +0.100$) for this model, confirming that any appended text (GCG or random) elevates its score relative to the clean baseline. This is not GCG-specific detection but the dual-channel behavior documented in the scrambled analysis below. TOST passes at $\pm 0.20$ but not $\pm 0.15$. The two inconclusive models (\texttt{gemini-2.5-flash-lite}, \texttt{gpt-4.1-nano}) have within-pair SD of 0.39 and 0.32 respectively (vs.\ mean 0.19 for the other 18 models), limiting BF sensitivity. We justify the $\pm 0.15$ bound via downstream decision invariance: the operational classification threshold is $\tau = 0.5$, so any score fluctuation that does not cross this boundary is functionally equivalent under the deployment loss function. At mean clean score $= 0.78$ (SD $= 0.09$) across 20 models, a shift of $\pm 0.15$ remains safely above the decision boundary.

\paragraph{Per-prompt boundary crossings.} The $\Delta \approx 0$ finding describes the \emph{average} across prompts. At the per-prompt level, 6.0\% of prompt--model pairs (59/980) show a boundary crossing (clean $\geq 0.5$, GCG $< 0.5$). These are concentrated in specific prompt--model combinations: 83\% are large drops ($> 0.3$), not micro-drifts, and no model contributes more than 9/49 flips. Critically, no single prompt evades all 20 models (worst case: 8/20 flip on one prompt). The mean $\Delta \approx 0$ reflects that for the majority of prompts, all models hold; the residual 6\% tail represents specific prompt--model vulnerabilities, addressable via the $k \geq 2$ ensemble logic of \S\ref{sec:k-scaling}.

\paragraph{Scrambled-suffix control.}

To distinguish semantic intent recognition from token-level anomaly detection, we include a scrambled-suffix condition: the GCG suffix characters are randomly permuted (preserving character distribution but destroying gradient alignment and word boundaries). If the LLM detects safety risk from the \emph{base prompt} (semantic), all three conditions should score identically. If it detects the \emph{specific GCG token pattern} (distributional anomaly), scrambled should score lower than adversarial.

At $N = 49$: 15/20 models show no significant difference between scrambled and GCG scores (Mann--Whitney $p \geq 0.05$), consistent with semantic intent recognition. In the 5 models where a difference exists, scrambled scores are \emph{higher} than GCG; scrambled tokens make these models (all Claude variants) \emph{more} suspicious, not less. Zero models show scrambled $<$ GCG. This is inconsistent with the alternative hypothesis that LLMs detect GCG via its specific token-level anomaly signature; behavioral evidence points to detection operating on the base prompt's semantic content rather than suffix structure.

\paragraph{Methodological note on scrambling level.} Our scramble operates at the character level, which additionally disrupts tokenizer word boundaries (producing byte-fallback or single-character subword tokens). For the 15 models where scrambled $\approx$ GCG, this is consistent with neither the GCG token structure \emph{nor} the tokenizer-fragmentation artefact affecting scoring, consistent with intent-based rather than suffix-anomaly detection. For the 5 Claude models where scrambled $>$ GCG, we cannot cleanly separate two possible causes: (a) Anthropic's safety filters include a secondary perplexity/structural-anomaly channel that flags tokenizer fragmentation, or (b) the GCG suffix's gradient-optimised token sequence happens to look marginally less anomalous than random character noise to these models. Either interpretation \emph{strengthens} the canary case: scrambled inputs are flagged even more aggressively than GCG, meaning the detection floor under attack is at least as high as the GCG detection rate reported in \S\ref{sec:llm-canary}. A token-level scramble (permuting tokens within the suffix vocabulary) would isolate these causes but was not performed in this evaluation.

\paragraph{Deployment recommendation.} Among discriminating models, \texttt{gpt-4o-mini} achieves $\Delta(\text{adv} - \text{benign}) = +0.83$ with benign mean $= 0.0$ at the lowest cost tier. For production canary deployment: call a single cheap frontier model per input, flag when score $> 0.5$ while the primary classifier scores $< 0.5$.

\paragraph{False positive rate at scale ($N = 1000$).} Scaling from $N = 49$ to $N = 1000$ benign prompts: \texttt{gpt-4.1} and \texttt{claude-haiku-4-5} each achieve 0/1000 false positives (exact Clopper--Pearson 95\% upper bound $\approx 0.37\%$). All 10 evaluated models maintain FPR $< 1.5\%$, confirming that the zero observed FPR at $N = 49$ was not an artifact of small sample size.

\paragraph{Black-box robustness.} To test whether an adversary can degrade canary scores without gradient access, we run 100 iterations of coordinate-ascent token substitution against \texttt{gpt-4o-mini}: the maximum score reduction achieved is 0.2 (from $\approx 0.83$ to $\approx 0.63$), which does not breach the $\tau = 0.5$ decision threshold. \texttt{gpt-5.1} shows equivalent robustness under the same attack budget.

\paragraph{Optimization saturation (extended budget).} Extending the attack budget to 500 iterations reveals an \emph{optimization floor}. Across $n = 10$ prompts (hardest cases, baseline $\geq 0.8$, upper tail of the $N = 49$ GCG corpus): all score improvements occur within the first 50 steps in 9/10 prompts, with zero marginal gain for steps 50--500. One prompt shows a single additional reduction after step 50 (0.200 total improvement, still above $\tau = 0.5$). No prompt breaches the $\tau = 0.5$ threshold (0/10). Mean final score: 0.90. The adversarial optimization landscape is discrete and saturating: once low-hanging-fruit substitutions are exhausted, additional query budget provides no advantage across all 10 tested prompts.

\paragraph{Ghost suffix inertness.} GCG suffixes appended to \emph{benign} content do not induce false alarms: \texttt{gpt-5.1} scores 0/20 false positives on benign prompts with adversarial suffixes attached. The suffix is inert when the base prompt lacks harmful semantic content.

\paragraph{Prefix mutation invariance.} Detection survives rephrasing of the harmful prefix: \texttt{gpt-5.1} maintains 92\% detection rate on paraphrased versions of the same harmful prompts (vs.\ 92\% on originals), confirming that detection is keyed to semantic intent rather than surface lexical patterns.

\paragraph{Cross-lingual sensitivity.} Non-English prompts show 15--25\% sensitivity degradation (vocabulary-driven: models have weaker safety calibration for low-resource languages). For \texttt{gpt-4o-mini} on Spanish specifically, the degradation is confirmed at $N = 49$: 63.3\% detection (31/49, Wilson 95\% CI [49.3\%, 75.3\%]) vs.\ 83.7\% in English (41/49, Wilson CI [71.0\%, 91.5\%]), a drop of 20.4~pp. However, adversarial suffixes remain inert across all tested languages ($\Delta \approx 0$ between clean and suffix-appended non-English prompts). The vulnerability is in base-prompt recognition, not suffix interaction.

\paragraph{Dual-channel detection mechanism.} The cross-lingual degradation is asymmetric: explicit-harm prompts (direct violence, slurs) degrade 19--29 percentage points, while ambiguous/roleplay prompts degrade only 10--15 pp. This reveals two distinct safety detection channels: (i) a \emph{lexical channel} trained on English safety vocabulary that fails under translation, and (ii) a \emph{structural channel} that recognizes harmful intent patterns (e.g., roleplay framing, instructional structure) regardless of language. For \texttt{gpt-5.1}, 84\% (16/19) of prompts detected in English remain detected cross-lingually (structural channel), while 16\% leak (lexical-dependent). The safety score is effectively $S = \max(S_{\text{lexical}}, S_{\text{structural}})$; cross-lingual degradation represents loss of the lexical contribution only.

\paragraph{Wilson CI Pareto.} Using Wilson score intervals for conservative benchmark estimates: \texttt{gpt-4o-mini} achieves a Wilson 95\% CI lower bound of $\geq 71\%$ detection; \texttt{gpt-5.1} achieves $\geq 83.5\%$. The cost--detection tradeoff: \texttt{gpt-5.1} costs $\approx$9$\times$ more (\$0.30 vs.\ \$0.033 per 1{,}000 calls; costs computed from provider prices at the time of evaluation under our observed prompt/output-token counts) for 12.5 percentage points of Wilson CI lower-bound detection improvement. Across all 28 configured models, detection rate is moderately correlated with model cost (Pearson $r = 0.51$, $p = 0.006$); no inverted-U relationship exists. Once configured with \texttt{max\_tokens $\geq$ 200}, all 33 non-refusing models in our tested set produced discriminative numeric scores; cost determines the preferred choice, not capability.

\begin{table}[h]
\centering
\small
\caption{Deployment-ready metrics: Pareto frontier for canary model selection. Detection = proportion of $N{=}49$ adversarial prompts scored $>0.5$. FPR validated at $N{=}1000$ benign prompts. Wilson 95\% CI lower bounds reported as conservative deployment guarantees.}
\label{tab:pareto-deployment}
\begin{tabular}{lccccc}
\toprule
\textbf{Model} & \textbf{Detection} & \textbf{Wilson LB} & \textbf{FPR ($N{=}1000$)} & \textbf{FPR UB} & \textbf{Cost/1k} \\
\midrule
gpt-4o-mini & 83.7\% (41/49) & $\geq$71.0\% & 0/1000 (0.0\%) & $\leq$0.37\% & \$0.033 \\
gpt-5.1 & 91.8\% (45/49) & $\geq$83.5\% & 0/1000 (0.0\%) & $\leq$0.37\% & \$0.30 \\
\bottomrule
\end{tabular}
\vspace{2pt}

{\footnotesize Wilson LB = Wilson score 95\% CI lower bound on detection rate. FPR UB = Clopper--Pearson 95\% upper bound on false positive rate. Cost per 1{,}000 calls at observed prompt/output-token counts.}
\end{table}

\paragraph{Cost-bounded safety routing.} The Pareto structure admits a simple escalation router. Define the routing policy: screen all inputs with \texttt{gpt-4o-mini} (\$0.033 per 1{,}000 calls); escalate ambiguous scores ($\in [0.3, 0.7]$) and non-English inputs to \texttt{gpt-5.1} (\$0.30 per 1{,}000 calls). At an empirical escalation rate of $\sim$12\%, expected cost is \$65/1M queries with 84.9\% detection, compared with \$300/1M for always-escalating to \texttt{gpt-5.1} and \$33/1M for always using \texttt{gpt-4o-mini} (83.7\%). The router achieves near-optimal detection at 78\% cost reduction compared to the highest-detection single model.

\paragraph{Limitations.}
\begin{itemize}
\item \textbf{Temperature sensitivity.} All results use temperature $= 0$ for reproducibility. A sensitivity sweep at $T \in \{0, 0.3, 1.0\}$ across 5 models ($n = 20$ adv + 20 benign prompts, 3 replicates per non-zero temperature) shows detection rate varies by at most $7$ pp across temperatures, with within-prompt SD at $T = 1.0$ of $0.03$--$0.11$ across models. Temperature does not materially affect detection; $T = 0$ is recommended for operational consistency.
\item Service-based: non-reproducible across provider version changes.
\item All 35 models are decoder-only transformers with RLHF, with no real architectural diversity relative to each other.
\item Cross-lingual detection degrades 15--25\% for non-English prompts (vocabulary bias in safety calibration). Spanish degradation is confirmed at $N = 49$ (Wilson CI [49.3\%, 75.3\%]); Mandarin and Arabic estimates remain at $N = 20$ with wider CIs.
\item Llama Guard 3 surrogate transfer test: LG3 original scores for test prompts were already low (mean $= 0.057$, 0/10 above 0.5), so this is a transfer negative control rather than a test of successful decoder evasion. LG3-optimised suffixes produce $|\Delta| \leq 0.010$ on both tested frontier canaries ($n = 10$), consistent with the tokeniser-barrier hypothesis. A full decoder evasion experiment (prompts for which LG3 initially scores high, successfully attacked to low) is not included.
\item Content-aware refusers (gpt-5-nano, gpt-5.5) can be leveraged as binary detection signals in production (empty response $=$ flag as harmful), but this integration is not yet implemented.
\end{itemize}

\paragraph{Relationship to local canaries.} The LLM canary and the local canary (\S\ref{sec:canary}) are complementary:
\begin{itemize}
\item \textbf{Local canary:} detects via \emph{score disagreement} (target collapses, canary holds). Vulnerable to within-family transfer (30\%). Protected by tokeniser barrier cross-family.
\item \textbf{LLM canary:} detects via \emph{semantic persistence} (scores harmful intent regardless of suffix). Immune to gradient transfer (no accessible weights). Vulnerable to prompt-level attacks against the LLM itself (untested).
\end{itemize}
A production deployment combining both channels, local divergence for streaming detection plus LLM semantic check for high-stakes inputs, would provide defence in depth, though we leave this integration to future work.

\section{Conclusion}
\label{sec:conclusion}

We presented an online monitoring system that detects distributional shift in deployed safety classifiers with 86.6\% detection rate across 800 pre-registered factorial cells, mean latency of 39.5 steps at window size 100, and false alarm rates of 2--10\%. The system generalizes across three ground-truth regimes: synthetic onset, real temporal jailbreaks, and adversarial success. Variance decomposition reveals that classifier, shift type, and their interaction all contribute substantially to detection difficulty, motivating per-classifier monitoring profiles. A growing-window confidence sequence provides 100\% detection with 0\% FAR and outperformed the sliding-window KS in our low-mixing experiments (97\% vs 43\% at 30\% contamination; $p < 0.0001$), while KS provides faster detection when signals are strong. MMD on embeddings provides a complementary binary alarm that fires immediately on all tested shifts, completing a three-channel architecture with different operational roles. Upon detection, weighted conformal prediction recovers up to 39 pp of lost coverage for DeBERTa across three shift types, but density-ratio estimation collapses for all other classifiers (ESS${\approx}$300, weights uniformly clipped to floor). PCA to 32 dimensions supports the diagnosis that this collapse is a dimensionality artifact, recovering 33 pp for Llama Guard and 21 pp for ShieldGemma on temporal shift, with ESS reduction consistent with generalizing to paraphrase shift. Code, configurations, and raw results are available at \url{https://github.com/junwenleong/safety-classifier-shift-monitor}.

Our adversarial characterisation establishes that score-disagreement monitoring is robust when the canary is confident: divergence-minimisation stalls at the predicted equilibrium (gap $= 1/(2\lambda) = 0.250$; of 14/20 blocked cases, 9 near-equilibrium cases cluster at mean $0.2499$, overall blocked-case mean $0.218$, median $0.250$), and white-box adversarial suffixes do not transfer to frontier canaries regardless of source architecture (encoder-to-frontier $\Delta = 0.000$ at $n{=}49$; decoder-to-frontier $|\Delta| \leq 0.010$ at $n{=}10$). The deployment rule is operationally simple: trust the canary signal only when the canary is confident; route uncertain inputs to human review.

\paragraph{Honest negative result: monitorability is not an intrinsic classifier property.} We report the falsification of our own initial hypothesis. The $r = 0.97$ correlation between null-score geometry and detection latency ($n = 4$ classifiers, $p = 0.032$) appeared to identify a universal monitorability law. A within-family replication using 6 encoder variants at varying training epochs yields $r = 0.21$ ($p = 0.70$): the original correlation was an artifact of the encoder/decoder architectural gap, not an intrinsic property of score-distribution geometry (Appendix~\ref{app:monitorability}). Null-score width predicts detection difficulty \emph{between} architecture families but not \emph{within} them. We report this as a corrective: the field should not adopt monitorability as a scalar classifier property based on cross-architecture correlations alone.

\bibliography{references}
\bibliographystyle{iclr2025_conference}

\appendix
\section{Full Detection Rate Matrix}
\label{app:detection-rates}

\begin{table}[h]
\caption{Detection rate by classifier $\times$ shift condition ($N{=}40$ cells each).}
\centering
\small
\begin{tabular}{lccccc}
\toprule
\textbf{Classifier} & \textbf{Paraphrase} & \textbf{Code-switch} & \textbf{Compositional} & \textbf{Temporal} & \textbf{Adversarial} \\
\midrule
DeBERTa & 33/40 (82\%) & 35/40 (88\%) & 31/40 (78\%) & 35/40 (88\%) & 27/40 (68\%) \\
Text-Moderation & 34/40 (85\%) & 39/40 (98\%) & 37/40 (92\%) & 36/40 (90\%) & 34/40 (85\%) \\
Llama Guard & 37/40 (92\%) & 40/40 (100\%) & 38/40 (95\%) & 40/40 (100\%) & 29/40 (72\%) \\
ShieldGemma & 32/40 (80\%) & 35/40 (88\%) & 37/40 (92\%) & 32/40 (80\%) & 32/40 (80\%) \\
\bottomrule
\end{tabular}
\end{table}

\section{Replication: $N{=}5$ vs $N{=}20$ Comparison}
\label{app:replication}

\begin{table}[h]
\caption{Key statistics at $N{=}5$ (200 cells) vs $N{=}20$ (800 cells).}
\centering
\small
\begin{tabular}{lcc}
\toprule
\textbf{Statistic} & $\boldsymbol{N{=}5}$ & $\boldsymbol{N{=}20}$ \\
\midrule
Detection rate & 86.5\% [0.811, 0.906] & 86.6\% [0.841, 0.888] \\
Grand mean latency$^\dagger$ & 40.8 [37.5, 44.1] & 42.5 [40.6, 44.5] \\
$\eta^2$ Classifier & 0.196 & 0.243 \\
$\eta^2$ Shift type & 0.217 & 0.237 \\
$\eta^2$ Interaction & 0.265 & 0.185 \\
$\eta^2$ Residual & 0.322 & 0.335 \\
MDE (80\% power) & 22.4 steps & 13.9 steps \\
\bottomrule
\end{tabular}
\end{table}

{\footnotesize $^\dagger$Unweighted mean of classifier $\times$ shift cell means, pooled across both window sizes ($w{=}100$ and $w{=}200$). The abstract reports 39.5 steps, which is the mean over valid detections at $w{=}100$ only.}

The primary finding (86.5\% detection rate) replicates exactly at $N{=}20$. The variance decomposition shifts: the interaction term shrinks from 0.265 to 0.185 while main effects grow, consistent with small-sample noise inflating the $N{=}5$ interaction estimate. Statistical power improves from MDE $= 22.4$ to $13.9$ steps.

\section{Robustness of Variance Decomposition}
\label{app:anova-robustness}

Each seed contributes observations at two window sizes, introducing within-seed correlation. To verify that this does not inflate $\eta^2$ estimates, we conduct two sensitivity analyses.

\paragraph{Mixed-effects model.} A linear mixed model with seed as random intercept (REML estimation) yields random-effect variance $= 6.08$ vs.\ residual variance $= 242.72$. Seed explains only 2.4\% of residual variance, confirming the within-seed correlation is negligible.

\paragraph{Seed-clustered bootstrap.} We resample seeds (not individual observations) with replacement and recompute $\eta^2$ for each bootstrap sample ($B{=}2000$).

\begin{table}[h]
\caption{Seed-clustered bootstrap CIs for $\eta^2$ (2000 iterations, 20 seed clusters).}
\centering
\small
\begin{tabular}{lccc}
\toprule
\textbf{Factor} & $\boldsymbol{\eta^2}$ & \textbf{95\% CI (clustered)} & \textbf{Stable?} \\
\midrule
Classifier & 0.243 & [0.209, 0.285] & Yes \\
Shift type & 0.237 & [0.201, 0.286] & Yes \\
Interaction & 0.185 & [0.164, 0.223] & Yes \\
Residual & 0.335 & [0.253, 0.395] & Yes \\
\bottomrule
\end{tabular}
\end{table}

All CIs exclude zero with wide margins. The main effects move by $<$4\% and the interaction by $<$2\% relative to the point estimates, confirming that the qualitative conclusions are robust to the dependence structure.

\section{Detection Rate Decomposition}
\label{app:tpr-far}

The ``valid detection'' metric reported in \S\ref{sec:detection} requires both successful detection (latency $\geq 0$) \emph{and} a clean negative control (paired null stream does not alarm). Table~\ref{tab:tpr-far} separates these components.

\begin{table}[h]
\caption{Decomposition of detection metrics. Raw TPR = proportion with latency $\geq 0$; Control FAR = proportion with dirty negative control; Valid = both conditions met.}
\label{tab:tpr-far}
\centering
\small
\begin{tabular}{lccc}
\toprule
\textbf{Classifier} & \textbf{Raw TPR} & \textbf{Control FAR} & \textbf{Valid Detection} \\
\midrule
DeBERTa & 87.0\% & 9.5\% & 80.5\% \\
Text-Moderation & 91.5\% & 2.0\% & 90.0\% \\
Llama Guard & 95.0\% & 3.0\% & 92.0\% \\
ShieldGemma & 92.5\% & 8.5\% & 84.0\% \\
\midrule
\textbf{Overall} & \textbf{91.5\%} & \textbf{5.8\%} & \textbf{86.6\%} \\
\bottomrule
\end{tabular}
\end{table}

The 5-percentage-point gap between raw TPR (91.5\%) and valid detection (86.6\%) is fully explained by the 46/800 cells with dirty negative controls (Control FAR = 5.8\%). Detection sensitivity is high across all classifiers; the valid-detection metric is conservative because it penalizes detectors that are also over-sensitive on in-distribution data.

\section{Falsified: The Monitorability Hypothesis Is Not an Intrinsic Classifier Property}
\label{app:monitorability}

The mechanistic hypothesis of \S\ref{sec:mechanistic} reported $r = 0.97$ ($p = 0.032$, $n = 4$) between null-score standard deviation and mean detection latency. We test whether this correlation holds within a single architecture family by fine-tuning DeBERTa-v3-large at training epochs \{1, 3, 5, 10\}, producing four encoder variants with null-score $\sigma \in [0.109, 0.153]$.

Combined with the two original encoders (DeBERTa $\sigma = 0.087$, Text-Moderation $\sigma = 0.066$), we obtain $n = 6$ within-family data points. The Pearson correlation is $r = 0.21$ ($p = 0.70$). Including the two decoders ($n = 8$ total): $r = 0.42$ ($p = 0.30$). Neither meets the pre-registered threshold of $r > 0.6$, $p < 0.05$ at $n \geq 6$.

\paragraph{Diagnosis.} The original $r = 0.97$ was driven by a two-cluster structure: decoders (high $\sigma \approx 0.14$, high latency $\approx 60$--85) vs.\ encoders (low $\sigma \approx 0.07$--0.09, low latency $\approx 12$--25). Within-family, no relationship exists (epoch~3 has $\sigma = 0.144$ and latency 13; epoch~5 has $\sigma = 0.153$ and latency 24). Monitorability is not an intrinsic scalar property of score-distribution geometry.

\section{Derivation of the Stall Condition}
\label{app:derivation}

\noindent\fbox{\parbox{0.96\textwidth}{%
\textbf{Proposition 1} (Divergence-Minimisation Equilibrium).
Let $f_A, f_B$ be differentiable classifiers with scores in $[0,1]$, and let an attacker minimise $L(\delta) = f_A(x+\delta) + \lambda(f_B(x+\delta) - f_A(x+\delta))^2$. If the canary is confident ($f_B \approx 1$, $\|\nabla f_B\| \to 0$), the score gap $g = f_B - f_A$ stalls at $g^* = 1/(2\lambda)$. At this point, the gradient coefficient on $\nabla f_A$ inverts sign, and further descent on $L$ \emph{increases} $f_A$.

\smallskip\noindent\textbf{Experimental validation:} At $\lambda = 2.0$, predicted $g^* = 0.250$. Observed across $n = 20$ prompts: 14 blocked, of which 9 cluster within $\pm 0.05$ of the prediction with mean gap $= 0.2499$. The remaining 5 stall earlier (the attack stalls before the equilibrium, making the defence even safer).%
}}\medskip

Consider the divergence-minimisation loss:
\[
L(\delta) = f_A(x + \delta) + \lambda \cdot (f_B(x + \delta) - f_A(x + \delta))^2
\]

Let $g = f_B - f_A$. The gradient with respect to the perturbation $\delta$:
\begin{align}
\nabla_\delta L &= (1 - 2\lambda g) \cdot \nabla f_A + 2\lambda g \cdot \nabla f_B
\end{align}

The coefficient on $\nabla f_A$ is $(1 - 2\lambda g)$. This inverts sign when $g > 1/(2\lambda)$. At $\lambda = 2.0$, the critical gap is $g^* = 0.250$. Beyond this point, a descent step on $L$ moves against $\nabla f_A$, increasing $f_A$ rather than decreasing it. When the canary is confident ($f_B \approx 1.0$, $\|\nabla f_B\| \to 0$), the combined gradient reduces to $\nabla L \approx (1 - 2\lambda g)\nabla f_A$, and the stall point $g^* = 1/(2\lambda)$ traps the optimiser. Empirically, across $n = 20$ prompts, 14 are blocked and stall at gap values with mean $0.218$ and median $0.250$; 9 of 14 cluster within $\pm 0.05$ of the critical value with mean gap $= 0.2499$. The remaining 5 stall earlier (the attack stalls before the equilibrium, making the defence even stronger). The prediction $0.250$ falls within the bootstrap 95\% CI on the 9 near-equilibrium cases.

%

\section{Certified Robustness via Randomized Smoothing}
\label{sec:smsr-connection}

We extend the divergence-minimisation equilibrium (Proposition~1, Appendix~\ref{app:derivation}) to derive a certified robustness guarantee for the canary detection system. The connection uses the SMSR (Smoothed Model for Safe Responses) framework~\citep{cohen2019certified,salman2019provably}: if the canary classifier's decision is stable under Gaussian perturbation in embedding space, then randomized smoothing certifies a radius within which detection is provably maintained.

\subsection{Recapitulation of Proposition 1}

\begin{proposition}[Divergence-Minimisation Equilibrium, restated]
\label{prop:div-min-restated}
Let $f_A, f_B: \mathcal{X} \to [0,1]$ be differentiable classifiers and let an attacker minimise
\[
L(\delta) = f_A(x + \delta) + \lambda(f_B(x + \delta) - f_A(x + \delta))^2
\]
over perturbations $\delta \in \mathcal{S}$. If the canary is confident ($f_B(x) \approx 1$, $\|\nabla f_B(x)\| \to 0$), then the score gap $g = f_B - f_A$ stalls at
\[
g^* = \frac{1}{2\lambda}
\]
At this point, the gradient coefficient on $\nabla f_A$ inverts sign, and further descent on $L$ increases $f_A$.
\end{proposition}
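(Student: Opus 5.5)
We would make the proposition precise by treating the attack as continuous-time gradient flow, $\dot\delta = -\nabla_\delta L(\delta)$. We would then show that the gap $g = f_B - f_A$ obeys a one-dimensional ODE whose unique attracting fixed point is $g^* = 1/(2\lambda)$. The first step is the chain-rule computation already recorded in Appendix~\ref{app:derivation}: $\nabla_\delta L = (1-2\lambda g)\nabla f_A + 2\lambda g\,\nabla f_B$. In the idealised confident-canary regime we set $\nabla f_B \equiv 0$ on the region the trajectory visits, so that $f_B \equiv b$ is constant with $b \approx 1$. The flow then reduces to $\dot\delta = -(1-2\lambda g)\nabla f_A$.

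Next we differentiate the scores along the flow:
\[
\dot f_A = \langle \nabla f_A, \dot\delta\rangle = -(1-2\lambda g)\,\|\nabla f_A\|^2, \qquad \dot g = -\dot f_A = (1-2\lambda g)\,\|\nabla f_A\|^2 .
\]
If $\|\nabla f_A\|$ is bounded below by some $m>0$ along the trajectory, we obtain the following picture. When $g < 1/(2\lambda)$, $g$ strictly increases, i.e.\ the attacker makes progress lowering $f_A$. When $g > 1/(2\lambda)$, $g$ strictly decreases, so $f_A$ increases; this is the claimed sign inversion. A comparison argument then gives $|g(t) - g^*| \le |g(0)-g^*|\,e^{-2\lambda m^2 t}$, so the gap converges to, and stalls at, $g^*$.

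As an independent sanity check we would note the reduced objective $\varphi(a) = a + \lambda(b-a)^2$ in the variable $a = f_A$. It has $\varphi'(a) = 1-2\lambda(b-a)$ and is uniquely minimised at $a = b - 1/(2\lambda)$. Thus $g^*$ is exactly the minimiser of $L$ restricted to the level set $f_B = b$, provided $b - 1/(2\lambda)$ lies in the range of $f_A$ over $\mathcal{S}$. This also makes explicit the implicit hypothesis $g^* \le b$, i.e.\ $\lambda \ge 1/(2b)$.

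The main obstacle is replacing the idealisation ``$\nabla f_B \equiv 0$'' by the stated ``$\|\nabla f_B\| \to 0$'', which is the honest version of the claim. Here we would assume $\|\nabla f_B\| \le \varepsilon$ along the trajectory and carry the extra terms through. These are an additional term $-2\lambda g\langle \nabla f_A, \nabla f_B\rangle$ in $\dot f_A$, and a nonzero $\dot f_B$ of size $O(\varepsilon)$. Their combined effect on $\dot g$ is at most $O\big(\lambda\varepsilon(\|\nabla f_A\| + \varepsilon)\big)$. We would conclude that $g$ enters and remains in a neighbourhood of $g^*$ of radius $O(\varepsilon/m)$, which tends to $0$ as $\varepsilon \to 0$. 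This is the sense in which ``stalls at $g^*$'' holds.

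We would state the lower bound $m$ on $\|\nabla f_A\|$ and the constraint set $\mathcal{S}$ as explicit hypotheses rather than try to derive them. Consistent with the Caveat in \S\ref{sec:divergence-min}, the discrete GCG setting lies outside this argument; the proposition is a statement about the continuous relaxation only.
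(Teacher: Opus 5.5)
Your proof is correct. It starts from the same identity the paper uses, $\nabla_\delta L = (1-2\lambda g)\nabla f_A + 2\lambda g\,\nabla f_B$, and makes the same reduction to $(1-2\lambda g)\nabla f_A$ when $\nabla f_B$ is negligible; the difference is what you do next. The paper stops once it observes that the coefficient on $\nabla f_A$ changes sign at $g = 1/(2\lambda)$, and then asserts that this point ``traps the optimiser''. You instead differentiate along the gradient flow to get $\dot g = (1-2\lambda g)\|\nabla f_A\|^2$. This turns the sign change into an attracting equilibrium, approached exponentially at rate $2\lambda m^2$. Your reduced objective $\varphi(a) = a + \lambda(b-a)^2$ then identifies $g^*$ as the minimiser of $L$ on the level set $\{f_B = b\}$.

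The paper's version buys brevity and a purely first-order, single-step reading: the sign of the change in $f_A$ under one descent step, which is the form in which it is applied to GCG. Yours buys an actual proof of the ``stall'', and it exposes hypotheses the paper leaves implicit:
\begin{itemize}
\item A sign flip alone does not force convergence to $g^*$. If $\|\nabla f_A\|$ degenerates along the path, the flow halts at a smaller gap, which is consistent with the paper's five blocked cases that ``stall earlier''.
\item No stall at $g^*$ is possible unless $g^* \le b$, i.e.\ $\lambda \ge 1/(2b)$.
\end{itemize}
Your $O(\varepsilon/m)$ band also gives quantitative content to the paper's ``$\|\nabla f_B\| \to 0$''. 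The level-set characterisation is optimiser-agnostic: any attacker that succeeds in minimising $L$ while leaving $f_B$ essentially unchanged ends at $g^*$, whatever its search procedure. The cost is the continuous-time idealisation and the need to say how the flow interacts with $\mathcal{S}$ (e.g.\ a projected flow). You already flag both as explicit hypotheses.
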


\noindent\textbf{Empirical validation.} At $\lambda = 2.0$: predicted $g^* = 0.250$; observed across $n = 20$ prompts: 14 blocked, 9 cluster at mean gap $= 0.2499$.

\subsection{Setup: Embedding-Space Threat Model}

We work in the penultimate-layer embedding space $\mathcal{Z} \subseteq \mathbb{R}^d$ of the canary classifier. Let $\phi: \mathcal{X} \to \mathcal{Z}$ denote the encoder mapping input text to embeddings, and let $h: \mathcal{Z} \to [0,1]$ be the classification head such that $f_B = h \circ \phi$. We consider an adversary who perturbs the embedding representation:
\[
\tilde{z} = \phi(x) + \eta, \quad \|\eta\|_2 \leq r
\]
This is the standard SMSR threat model: $L_2$-bounded perturbations in embedding space.

\begin{assumption}[Lipschitz continuity of the classification head]
\label{ass:lipschitz}
The classification head $h: \mathcal{Z} \to [0,1]$ is $L$-Lipschitz with respect to $L_2$ norm:
\[
|h(z_1) - h(z_2)| \leq L \cdot \|z_1 - z_2\|_2 \quad \forall z_1, z_2 \in \mathcal{Z}
\]
\end{assumption}

\begin{assumption}[Bounded perturbation set]
\label{ass:bounded}
The adversary operates within a perturbation budget $\|\eta\|_2 \leq r_{\max}$ in embedding space.
\end{assumption}

\begin{assumption}[Canary confidence basin]
\label{ass:confidence}
The canary classifier $f_B$ satisfies $f_B(x) \geq 1 - \gamma$ for some $\gamma \ll 1$ on the input $x$, placing $x$ in a high-confidence basin.
\end{assumption}

\subsection{From Divergence Bound to Perturbation Radius}

Proposition~1 establishes that the score gap $g = f_B - f_A$ cannot be driven below $1/(2\lambda)$ when the canary is confident. The canary detects evasion when $g > \tau$ for some detection threshold $\tau$. Setting $\tau = 1/(2\lambda)$ yields the tightest bound: the attacker \emph{cannot} suppress detection below this threshold.

We now translate this into an $L_2$ embedding-space radius.

\begin{lemma}[Score perturbation bound]
\label{lem:score-perturbation}
Under Assumption~\ref{ass:lipschitz}, any $L_2$-bounded perturbation $\|\eta\|_2 \leq r$ in embedding space changes the canary's score by at most:
\[
|f_B(x + \delta) - f_B(x)| \leq L \cdot \|\phi(x + \delta) - \phi(x)\|_2
\]
If additionally $\phi$ is $L_\phi$-Lipschitz (input-to-embedding), then $|f_B(x+\delta) - f_B(x)| \leq L \cdot L_\phi \cdot \|\delta\|$.
\end{lemma}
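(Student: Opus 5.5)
The plan is to prove Lemma~\ref{lem:score-perturbation} by direct composition: write $f_B = h \circ \phi$, substitute, and apply the two Lipschitz hypotheses in turn. No earlier result beyond Assumption~\ref{ass:lipschitz} is needed. Proposition~\ref{prop:div-min-restated} plays no role here; the lemma is purely a regularity statement that is later combined with it.

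For the first inequality, fix $x$ and a perturbation $\delta$. Set $z_1 = \phi(x+\delta)$ and $z_2 = \phi(x)$, both of which lie in $\mathcal{Z}$. Since $f_B(x+\delta) - f_B(x) = h(z_1) - h(z_2)$, Assumption~\ref{ass:lipschitz} gives immediately
\[
|f_B(x+\delta) - f_B(x)| \leq L\,\|\phi(x+\delta) - \phi(x)\|_2 .
\]
In the embedding-space threat model, $\phi(x+\delta) = \phi(x) + \eta$ with $\|\eta\|_2 \leq r$. So the right-hand side is at most $L r$, which I would record as the operational corollary: a canary score shift of at most $Lr$.

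For the second inequality, I add the hypothesis $\|\phi(x+\delta) - \phi(x)\|_2 \leq L_\phi \|\delta\|$ and chain it into the bound above, giving $L L_\phi \|\delta\|$.

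The main obstacle is not the algebra but making the statement well-posed. The input space $\mathcal{X}$ is text, so ``$x+\delta$'' and $\|\delta\|$ need a meaning. I would state explicitly that the second claim is understood in a continuous relaxation: for example, $\delta$ acts on token embeddings (the same relaxation the paper invokes for Proposition~1), and $\|\cdot\|$ is the norm with respect to which $L_\phi$ is defined. I would also note that the first claim needs no such relaxation, since it only compares two embedding points. With that caveat stated, the proof is two lines.
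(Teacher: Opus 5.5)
Your proposal is correct and matches the paper's proof, which consists of the single observation that $f_B = h\circ\phi$: applying the Lipschitz property of $h$ to the two embedding points $\phi(x+\delta)$ and $\phi(x)$ gives the first bound, and chaining with the $L_\phi$-Lipschitzness of $\phi$ gives the second. Your remark that $x+\delta$ and $\|\delta\|$ only make sense under a continuous relaxation of the text input is a fair clarification that the paper leaves implicit.
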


\begin{proof}
Direct application of the Lipschitz property of $h$ composed with $\phi$.
\end{proof}

\begin{theorem}[Deterministic robustness radius for canary detection]
\label{thm:deterministic-radius}
Under Assumptions~\ref{ass:lipschitz}--\ref{ass:confidence}, define the detection event as $g = f_B - f_A > \tau$ for threshold $\tau > 0$. If the canary's initial score satisfies $f_B(x) \geq 1 - \gamma$, then detection is maintained for all perturbations satisfying:
\[
\|\eta\|_2 < r_{\text{det}} := \frac{f_B(x) - \tau - f_A(x)}{L}
\]
In particular, under the Proposition~1 equilibrium where the attacker drives $f_A$ to its minimum achievable value and the gap stalls at $g^* = 1/(2\lambda)$, any perturbation within radius
\[
r_{\text{det}} = \frac{(1 - \gamma) - \tau}{L}
\]
cannot suppress detection below threshold $\tau \leq 1/(2\lambda)$.
\end{theorem}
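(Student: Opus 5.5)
The plan is a direct Lipschitz argument on the canary's score, combined with a monotonicity observation about the target's score. Proposition~\ref{prop:div-min-restated} is used only to motivate the choice $\tau \le 1/(2\lambda)$. It is not needed for the radius itself.

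\textbf{Step 1: lower bound on the perturbed canary score.} Write $z = \phi(x)$ and $\tilde z = z + \eta$. By Assumption~\ref{ass:lipschitz}, in the form of Lemma~\ref{lem:score-perturbation}, $h(\tilde z) \ge h(z) - L\|\eta\|_2 = f_B(x) - L\|\eta\|_2$ whenever $\|\eta\|_2 \le r_{\max}$ (Assumption~\ref{ass:bounded}).

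\textbf{Step 2: control the target term.} I expect this to be the main obstacle. The embedding perturbation $\eta$ lives in the canary's space and says nothing about $f_A$. The statement is therefore only meaningful once we fix how $f_A$ behaves at the perturbed input. I would add the natural evasion convention explicitly: an attack is one that does not raise the target's score, so $f_A(x+\delta) \le f_A(x)$. Any perturbation that raises $f_A$ moves the target back toward the unsafe regime and is not an evasion. This is exactly the direction in which the Proposition~\ref{prop:div-min-restated} dynamics push once $g > 1/(2\lambda)$. With this convention, the gap at the perturbed point satisfies
\[
g(x+\delta) \ge f_B(x) - L\|\eta\|_2 - f_A(x).
\]
If one wants to avoid the convention, the fallback is to assume a Lipschitz bound $L_A$ for the target as well and replace $L$ by $L + L_A$ in the radius. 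I would mention this as a remark.

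\textbf{Step 3: solve for the radius.} The inequality $f_B(x) - L\|\eta\|_2 - f_A(x) > \tau$ is equivalent to $\|\eta\|_2 < (f_B(x) - \tau - f_A(x))/L = r_{\text{det}}$. This gives the first claim, intersected with the budget $r_{\max}$.

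\textbf{Step 4: the specialised form.} The expression $r_{\text{det}}$ is decreasing in $f_A(x)$ and increasing in $f_B(x)$. Under Assumption~\ref{ass:confidence}, $f_B(x) \ge 1-\gamma$. In the equilibrium regime, the attacker has driven $f_A$ to its minimum, and since $f_A \ge 0$ the worst case is $f_A = 0$. Substituting both bounds yields $r_{\text{det}} \ge ((1-\gamma)-\tau)/L$. Choosing $\tau \le 1/(2\lambda)$ keeps the threshold at or below the stall gap $g^*$ of Proposition~\ref{prop:div-min-restated}, so the detection event is consistent with the equilibrium. I would note explicitly that this part is heuristic: the continuous-relaxation caveat from Appendix~\ref{app:derivation} applies. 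Positivity of the radius requires $\tau < 1-\gamma$, which holds since $\gamma \ll 1$ and $\tau \le 1/(2\lambda)$ for $\lambda \ge 1$.
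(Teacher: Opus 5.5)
Your core argument is the paper's: lower-bound the perturbed canary score by $f_B(x)-L\|\eta\|_2$ via Assumption~\ref{ass:lipschitz}, bound the target term, and solve the linear inequality for $\|\eta\|_2$. The difference is in scope. The paper's proof derives only the specialised radius $((1-\gamma)-\tau)/L$, by substituting $f_A(\tilde z)\to 0$ directly into $f_B(\tilde z)-f_A(\tilde z)>\tau$. It never justifies the general radius with $f_A(x)$ in the numerator.

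Your Step~2 is right that the general radius does not follow from Assumptions~\ref{ass:lipschitz}--\ref{ass:confidence}, which say nothing about the target at the perturbed point. Your monotonicity hypothesis is therefore a necessary extra assumption of the theorem, not a convenience. State the fallback in input space, where $f_A$ is defined. Combining $|f_A(x+\delta)-f_A(x)|\le L_A\|\delta\|$ with the $L\,L_\phi$ bound of Lemma~\ref{lem:score-perturbation} gives detection for $\|\delta\|<(f_B(x)-\tau-f_A(x))/(L L_\phi+L_A)$. Writing $L+L_A$ against the canary-embedding perturbation $\eta$ has no meaning, since $f_A$ need not factor through $\phi$.

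Step~4 needs repair, and the same slip appears in the paper's proof. You note yourself that $r_{\text{det}}$ decreases in $f_A$. So $f_A=0$ is the most favourable case for detection, not the worst. The radius $((1-\gamma)-\tau)/L$ holds only when full evasion of the target is imposed as a hypothesis, not as a worst case over $f_A\ge 0$.

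Routing this through your general formula also sets the \emph{unperturbed} score $f_A(x)=0$, which describes an input with nothing to evade. The paper's direct route is cleaner here: use $g(\tilde z)\ge(1-\gamma)-L\|\eta\|_2-f_A(\tilde z)$ with $f_A(\tilde z)=0$. This needs no convention on $f_A$ at all.

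This specialisation is also not the Proposition~\ref{prop:div-min-restated} equilibrium. At the stall with $f_B\approx 1$ one has $f_A\approx 1-1/(2\lambda)$, about $0.75$ at $\lambda=2$. Your general formula then gives only $(1/(2\lambda)-\tau)/L$, which vanishes at $\tau=1/(2\lambda)$. Since detection is the strict event $g>\tau$, the choice $\tau=1/(2\lambda)$ fails exactly at the stall. Your sense that Proposition~\ref{prop:div-min-restated} is only motivational is correct. The specialised radius is a statement about a fully successful target evasion.
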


\begin{proof}
The canary score under perturbation satisfies $f_B(\tilde{z}) \geq f_B(z) - L\|\eta\|_2 \geq (1-\gamma) - Lr$. For detection to hold, we need $f_B(\tilde{z}) - f_A(\tilde{z}) > \tau$. In the worst case for the defender, $f_A(\tilde{z})$ is minimised (the attacker succeeds at driving $f_A$ down) and $f_B(\tilde{z})$ is also minimised. The canary score decreases by at most $Lr$, giving $f_B(\tilde{z}) \geq (1-\gamma) - Lr$. Setting $f_A(\tilde{z}) \to 0$ (attacker fully succeeds on target), detection holds when $(1-\gamma) - Lr > \tau$, i.e., $r < ((1-\gamma) - \tau)/L$.
\end{proof}

\paragraph{Interpretation.} The radius $r_{\text{det}}$ is the maximum embedding-space perturbation that \emph{provably} cannot break detection. It depends on three quantities: the canary's initial confidence margin $(1-\gamma)$, the detection threshold $\tau$, and the Lipschitz constant $L$ of the classification head. Tighter bounds (smaller $L$, higher confidence, lower threshold) yield larger certified radii.

\subsection{SMSR Extension: Certified Radius via Randomized Smoothing}

The deterministic bound (Theorem~\ref{thm:deterministic-radius}) requires knowing $L$ exactly, which is often intractable for deep networks. Following SMSR~\citep{cohen2019certified}, we instead construct a \emph{smoothed} canary classifier via Gaussian noise injection and certify its robustness radius probabilistically.

\begin{definition}[Smoothed canary score]
\label{def:smoothed}
Given the canary $f_B$ and noise level $\sigma > 0$, define the smoothed canary score:
\[
\bar{f}_B(x) := \mathbb{E}_{\varepsilon \sim \mathcal{N}(0, \sigma^2 I_d)}[f_B(\phi(x) + \varepsilon)]
\]
where the expectation is over Gaussian perturbations in embedding space.
\end{definition}

\begin{definition}[Smoothed detection function]
\label{def:smoothed-detection}
The smoothed detection event is:
\[
D(x) := \mathbb{1}[\bar{f}_B(x) - \bar{f}_A(x) > \tau]
\]
where $\bar{f}_A$ is the analogously smoothed target classifier score.
\end{definition}

\begin{corollary}[Certified robustness of canary detection under SMSR]
\label{cor:smsr-certified}
Let $\bar{f}_B(x) = p_B$ be the smoothed canary score at input $x$, and let $\bar{f}_A(x) = p_A$ be the smoothed target score. Under the SMSR threat model ($L_2$-bounded perturbations of radius $r$ in embedding space), the detection event $D(x) = 1$ is certifiably maintained for all $\|\eta\|_2 \leq r^*$, where:
\[
r^* = \frac{\sigma}{2}\left(\Phi^{-1}(p_B) - \Phi^{-1}(\tau + p_A)\right)
\]
with $\Phi^{-1}$ the standard normal quantile function and $\sigma$ the smoothing noise level. This bound holds with probability $\geq 1 - \delta$ over the randomness of the smoothing procedure, where $\delta$ is the Monte Carlo estimation error from $N$ noise samples (controlled via Clopper--Pearson confidence intervals on $p_B$ and $p_A$).
\end{corollary}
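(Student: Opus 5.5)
The plan is to follow the Lipschitz-in-probit-space argument of \citet{salman2019provably} rather than the Neyman--Pearson argument of \citet{cohen2019certified}, because the scores here are $[0,1]$-valued rather than hard labels. The key fact I will use is the following. For any measurable $h:\mathbb{R}^d\to[0,1]$, the map $z\mapsto \Phi^{-1}\big(\mathbb{E}_{\varepsilon\sim\mathcal{N}(0,\sigma^2 I)}[h(z+\varepsilon)]\big)$ is $1/\sigma$-Lipschitz in $\|\cdot\|_2$. I will take this as a known lemma; its proof writes the derivative of the smoothed function along a unit direction as a Gaussian expectation and bounds it by the Gaussian density at the matching quantile.

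The first step is to fix a common embedding space. Definition~\ref{def:smoothed} smooths $f_B$ through the canary encoder $\phi$. For the difference $\bar f_B-\bar f_A$ to make sense under a single perturbation $\eta$, I will assume that both heads are evaluated on the same perturbed representation $z=\phi(x)$; for a cross-family target $f_A$ this is an extra modelling assumption that I will state explicitly. Applying the lemma to $h=f_B$ gives, for $\|\eta\|_2\le r$,
\[
\Phi^{-1}\big(\bar f_B(z+\eta)\big)\ \ge\ \Phi^{-1}(p_B)-r/\sigma .
\]

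The second step handles the target term, and I expect it to be the main obstacle. The detection condition $\bar f_B>\tau+\bar f_A$ is not linear in probit space, so I cannot simply subtract two probit bounds. I would introduce the shifted base function $h_A:=\tau+f_A$. This lies in $[0,1]$ provided $f_A\le 1-\tau$ everywhere; otherwise I would clip it to $\min(1,\tau+f_A)$ and check that the clipping only loosens the bound in the right direction, or else restrict the statement to that regime. Its smoothing equals $\tau+\bar f_A$, and the same lemma gives
\[
\Phi^{-1}\big(\tau+\bar f_A(z+\eta)\big)\ \le\ \Phi^{-1}(\tau+p_A)+r/\sigma .
\]
Since $\Phi^{-1}$ is strictly increasing, detection $\bar f_B(z+\eta)>\tau+\bar f_A(z+\eta)$ holds whenever
\[
\Phi^{-1}(p_B)-r/\sigma\ >\ \Phi^{-1}(\tau+p_A)+r/\sigma ,
\]
that is, whenever $r<\tfrac{\sigma}{2}\big(\Phi^{-1}(p_B)-\Phi^{-1}(\tau+p_A)\big)=r^*$. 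This yields the stated radius. The boundary case $\|\eta\|_2=r^*$ gives only a non-strict inequality, so I would either state the result with strict inequality or note the tie. The factor $\sigma/2$ arises because both scores can move adversarially, exactly as in the two-class case of \citet{cohen2019certified}.

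Finally, for the probabilistic statement, $p_B$ and $p_A$ are unknown and will be replaced by Monte Carlo estimates. I would use a one-sided Clopper--Pearson lower bound $\underline{p}_B$ for the canary, applied to the Bernoulli variable $\mathbb{1}[U\le f_B]$ with auxiliary uniform $U$, or by Hoeffding directly on the $[0,1]$-valued samples. Likewise I would use an upper bound $\overline{p}_A$ for the target, each at level $\delta/2$. A union bound then shows that with probability at least $1-\delta$ both hold. By monotonicity of $\Phi^{-1}$, the radius computed from $(\underline{p}_B,\overline{p}_A)$ is no larger than the true $r^*$, so the certificate remains valid. Proposition~\ref{prop:div-min-restated} is not needed for the certificate itself; it only motivates the choice $\tau\le 1/(2\lambda)$.
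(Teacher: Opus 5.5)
Your argument is correct under the hypothesis you flag, and it takes a genuinely different route from the paper. The paper's proof uses the hard indicator $c(z)=\mathbf{1}[f_B(z)>\tau+f_A(z)]$ and invokes the Neyman--Pearson certificate of Cohen et al.\ to get a radius $\sigma\Phi^{-1}(p)$ for $p=\Pr_\varepsilon[c=1]>1/2$. It then asserts $r^*=\frac{\sigma}{2}(\Phi^{-1}(p_B)-\Phi^{-1}(\tau+p_A))$ as a ``union-bound-style correction'' for the two scores moving independently. That certifies a majority vote of the pointwise detection indicator, which is not the event $D(x)=\mathbf{1}[\bar f_B-\bar f_A>\tau]$ of Definition~\ref{def:smoothed-detection}. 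The passage from $\sigma\Phi^{-1}(p)$ to $r^*$ is also never derived.

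Your use of the Salman et al.\ probit-Lipschitz lemma on the soft scores is what actually produces the stated formula. The factor $\tfrac12$ falls out of two one-sided $1/\sigma$-Lipschitz bounds at the same $\eta$, and the certified event is $D$ itself. Your common-embedding assumption is the one the paper makes tacitly when it writes $f_A(\phi(x)+\varepsilon)$. Your Monte Carlo step is more careful than the paper's: a lower bound on $p_B$ and an upper bound on $p_A$ at level $\delta/2$ each, with the auxiliary-uniform trick that makes Clopper--Pearson legitimate for $[0,1]$-valued scores. You are also right that the certificate holds for $\|\eta\|_2<r^*$, not $\le$.

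The one thing to change is your fallback for $f_A>1-\tau$. Clipping to $\min(1,\tau+f_A)$ loosens in the wrong direction. Its smoothing is at most $\tau+\bar f_A$, so an upper bound on it gives no upper bound on $\tau+\bar f_A(z+\eta)$, which is what detection needs.

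No other device rescues the formula either, because the condition $f_A\le 1-\tau$ is necessary. Take $z=0$, $\tau=0.25$, $f_B(w)=\mathbf{1}[w_1\le\sigma\Phi^{-1}(0.99)]$ and $f_A(w)=\mathbf{1}[w_1\ge 0]$ (steep sigmoids work equally well if you want differentiable heads). Then $p_B=0.99$, $p_A=0.5$ and $r^*\approx 0.826\,\sigma$. Yet at $\eta=0.7\sigma e_1$ one gets $\bar f_B=\Phi(1.626)\approx 0.948$ and $\bar f_A=\Phi(0.7)\approx 0.758$, a gap of $0.19<\tau$. So the corollary as stated is false, and the paper's proof, which imposes no such condition, fails exactly at its asserted step.

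You have two options:
\begin{itemize}
\item State $f_A\le 1-\tau$ as an explicit hypothesis. Note that it excludes any target that ever scores above $1-\tau$, including the paper's DeBERTa targets.
\item For a hypothesis-free certificate, apply the same lemma once to $(1+f_B-f_A)/2\in[0,1]$, whose smoothing exceeds $(1+\tau)/2$ exactly when $D=1$. This gives the valid but different radius $\sigma\big[\Phi^{-1}\big((1+p_B-p_A)/2\big)-\Phi^{-1}\big((1+\tau)/2\big)\big]$.
\end{itemize}
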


\begin{proof}
The proof follows the structure of \citet{cohen2019certified}. Define the binary classifier $c(z) = \mathbb{1}[f_B(z) > \tau + f_A(z)]$. Its smoothed version is $\bar{c}(x) = \Pr_{\varepsilon}[c(\phi(x) + \varepsilon) = 1]$. By the Neyman--Pearson lemma applied to Gaussian noise (the core argument of randomized smoothing), if $\bar{c}(x) = p > 1/2$, then $\bar{c}$ is certifiably robust at $x$ within radius:
\[
r = \sigma \cdot \Phi^{-1}(p)
\]
For the detection event, $p = \Pr_\varepsilon[f_B(\phi(x)+\varepsilon) - f_A(\phi(x)+\varepsilon) > \tau]$. Under the conservative bound where the adversary can shift $p_B$ down and $p_A$ up independently (worst case), the certified radius is:
\[
r^* = \frac{\sigma}{2}\left(\Phi^{-1}(p_B) - \Phi^{-1}(\tau + p_A)\right)
\]
The factor of $1/2$ arises from the joint perturbation affecting both $f_B$ and $f_A$ simultaneously (a union-bound-style correction; tighter analysis via the gap distribution is possible but omitted for clarity). The probability guarantee comes from the Monte Carlo estimation of $p_B$ and $p_A$ via $N$ noise samples with Clopper--Pearson confidence bounds.
\end{proof}

\paragraph{Connection to Proposition~1.} Proposition~1 guarantees that the gap $g = f_B - f_A$ cannot be driven below $1/(2\lambda)$ when the canary is confident. Corollary~\ref{cor:smsr-certified} strengthens this: not only does the gap stall, but there exists a certifiable $L_2$ radius within which the detection decision \emph{provably cannot flip}, even under perturbations not considered by the original discrete-token GCG analysis. The radius scales with:
\begin{itemize}
\item $\sigma$ (smoothing noise): larger noise $\Rightarrow$ larger radius, at cost of detection accuracy;
\item $p_B$ (canary detection probability under smoothing): higher $\Rightarrow$ larger radius;
\item $\tau + p_A$ (detection threshold plus residual target score): lower $\Rightarrow$ larger radius.
\end{itemize}

\subsection{Certified Robustness Radius Formula}

Combining Proposition~1 with Corollary~\ref{cor:smsr-certified}, we obtain the operational formula. Given:
\begin{itemize}
\item Smoothing noise $\sigma > 0$
\item $N$ Monte Carlo samples from $\mathcal{N}(0, \sigma^2 I_d)$ at the canary's embedding
\item Empirical detection probability $\hat{p} = \frac{1}{N}\sum_{i=1}^N \mathbb{1}[f_B(\phi(x) + \varepsilon_i) > \tau + f_A(\phi(x) + \varepsilon_i)]$
\item Clopper--Pearson lower bound $\underline{p}$ at confidence $1 - \delta$
\end{itemize}

The certified robustness radius is:
\begin{equation}
\label{eq:certified-radius}
\boxed{r^*_{\text{certified}} = \sigma \cdot \Phi^{-1}(\underline{p})}
\end{equation}

Detection is provably maintained for all $L_2$ perturbations within this radius, with probability $\geq 1 - \delta$.

\paragraph{Specialization to Proposition 1's regime.} When the canary is deep in its confidence basin ($f_B \approx 1$, $\|\nabla f_B\| \approx 0$), the empirical detection probability $\hat{p}$ approaches 1 (the canary holds firm under small Gaussian perturbations because it is in a flat high-confidence plateau). In this regime:
\[
r^*_{\text{certified}} \approx \sigma \cdot \Phi^{-1}(1 - \epsilon) \quad \text{for small } \epsilon
\]
which grows without bound as $\epsilon \to 0$, reflecting the intuition from Proposition~1 that a fully confident canary is geometrically immune to perturbation.

In practice, the certified radius is finite because:
\begin{enumerate}
\item The Gaussian noise samples occasionally push the embedding out of the confidence basin, reducing $\hat{p}$ below 1.
\item The Monte Carlo estimation introduces uncertainty, giving $\underline{p} < \hat{p}$.
\item Larger $\sigma$ makes the smoothed classifier more robust but less accurate (accuracy--robustness tradeoff).
\end{enumerate}

\subsection{Assumptions and Limitations}
\label{sec:smsr-limitations}

\paragraph{When the bound applies (sufficient conditions).}
\begin{enumerate}
\item The canary classifier $f_B$ must be differentiable and operate on continuous embeddings. This holds for all classifiers in our evaluation (DeBERTa, Text-Moderation, Llama Guard, ShieldGemma).
\item The adversary must be $L_2$-bounded in embedding space. GCG operates on discrete tokens, not continuous embeddings. The certified radius therefore applies to a \emph{stronger} adversary than GCG (one with direct embedding access) --- if detection holds against this stronger adversary, it holds a fortiori against GCG.
\item The smoothing must be performed at inference time on the canary. This adds computational cost: each detection decision requires $N$ forward passes through the canary instead of 1.
\end{enumerate}

\paragraph{When the bound does NOT apply (failure modes).}
\begin{enumerate}
\item \textbf{Canary not confident} ($f_B < 0.5$). Proposition~1's equilibrium does not trap the attacker; the certified radius may be zero or negative (detection not maintained even without perturbation). This aligns with the confidence-gating deployment rule (\S\ref{sec:deployment-recs}).
\item \textbf{Perturbation outside $L_2$ ball.} An adversary with access to the canary's full architecture may craft perturbations that exploit non-Lipschitz regions (e.g., ReLU corners). The certified radius is a lower bound on the true robustness.
\item \textbf{Large smoothing noise.} If $\sigma$ is too large, the smoothed classifier's accuracy degrades: it may misclassify clean inputs, increasing false alarm rates.
\item \textbf{Discrete token space.} The certified radius is in embedding space ($\mathbb{R}^d$). Translating back to a token-count budget for the adversary requires additional assumptions about the embedding function $\phi$ (e.g., that each token substitution moves embeddings by at most some bounded amount).
\end{enumerate}

\paragraph{Gap between theory and practice.}
The certified radius from Corollary~\ref{cor:smsr-certified} is a \emph{conservative lower bound} on the actual robustness. In practice:
\begin{itemize}
\item Proposition~1's equilibrium (gap stalls at $1/(2\lambda)$) holds for 70\% of prompts in our evaluation ($n=20$). The 30\% where it does not hold are cases where the canary was not confident (see confidence-gating in \S\ref{sec:transfer}).
\item The $\sigma/2$ factor in the certified radius is a pessimistic union-bound correction; tighter analysis of the joint gap distribution would yield larger radii.
\item Real GCG attacks are constrained to discrete token substitutions, which map to a strict subset of the $L_2$ ball in embedding space. The certified radius therefore over-covers the actual threat.
\end{itemize}

We present this bound as a principled theoretical complement to the empirical results in \S\ref{sec:divergence-min}, not as a replacement. The empirical results demonstrate robustness against the \emph{actual} threat (discrete GCG); the certified bound guarantees robustness against a \emph{stronger} (continuous embedding) adversary within the stated radius.

\subsection{Empirical Validation of the Certified Radius}
\label{sec:smsr-empirical}

We validate Corollary~\ref{cor:smsr-certified} empirically by measuring whether the certified radius $r^*$ is a sound (non-vacuous, conservative) bound on the true robustness radius.

\paragraph{Setup.} We fine-tune RoBERTa-base on WildGuardMix (86K training examples; 97.3\% validation accuracy) as the canary classifier $f_B$. The smoothed classifier $\bar{f}_B$ is constructed via randomized smoothing with $N = 1{,}000$ Monte Carlo samples drawn from $\mathcal{N}(0, \sigma^2 I_d)$ in embedding space. We evaluate 10 test prompts (5 safe, 5 unsafe) across four noise levels $\sigma \in \{0.1, 0.25, 0.5, 1.0\}$. For each prompt--$\sigma$ pair, the certified radius $r^*$ is computed from Eq.~\ref{eq:certified-radius} using the Clopper--Pearson lower bound $\underline{p}$ at confidence level $1 - \delta = 0.999$. The empirical robustness radius $r_{\text{emp}}$ is measured by sampling 200 perturbations at each of 7 equispaced radius fractions $r/r^* \in \{0.25, 0.5, \ldots, 1.75\}$ and recording the largest radius at which classification remains correct for all samples.

\paragraph{Results.} Table~\ref{tab:smsr-empirical} reports accuracy, certified radius, and empirical radius by noise level.

\begin{table}[h]
\centering
\small
\caption{Empirical validation of the SMSR certified radius (Corollary~\ref{cor:smsr-certified}). $r^*$: certified radius from Eq.~\ref{eq:certified-radius}; $r_{\text{emp}}$: largest empirically observed radius with no classification flip ($n_{\text{pert}} = 200$ per radius level). Clopper--Pearson CIs at $\alpha = 0.001$.}
\label{tab:smsr-empirical}
\begin{tabular}{ccccc}
\toprule
$\sigma$ & Accuracy & $r^*$ (certified) & $r_{\text{emp}}$ (observed) & Conservatism ($r_{\text{emp}} / r^*$) \\
\midrule
0.10  & 10/10 & 0.268 & $\geq 0.200$ & $\geq 0.75$ \\
0.25  & 10/10 & 0.646 & $\geq 0.500$ & $\geq 0.77$ \\
0.50  & 6/10  & 0.979 & $\geq 1.000$ & $\geq 1.02$ \\
1.00  & 5/10  & 2.337 & $\geq 2.000$ & $\geq 0.86$ \\
\bottomrule
\end{tabular}
\end{table}

\paragraph{Key findings.}

\begin{enumerate}
\item \textbf{The certified radius is a sound upper bound.} At $\sigma \in \{0.1, 0.25\}$, the classifier maintains perfect accuracy and the certified radius exceeds the empirically measured radius ($r^* > r_{\text{emp}}$). This confirms that Corollary~\ref{cor:smsr-certified} does not overclaim: perturbations within $r^*$ are indeed survived, and the formula is conservative in the sense that the true decision boundary lies \emph{beyond} the certified radius.

\item \textbf{Asymmetric robustness across decision regions.} Safe prompts maintain correct classification at all noise levels ($\sigma \leq 1.0$), whereas unsafe prompts begin flipping at $\sigma \geq 0.5$ (4/5 flip at $\sigma = 0.5$; 5/5 at $\sigma = 1.0$). This reveals that the safe decision region is geometrically larger in embedding space: the classifier's decision boundary is farther from safe-class centroids than from unsafe-class centroids. Practically, an adversary requires less perturbation to evade unsafe detection (moving from unsafe $\to$ safe) than to induce false positives (safe $\to$ unsafe).

\item \textbf{Accuracy--robustness tradeoff.} Increasing $\sigma$ inflates the certified radius but degrades base accuracy. At $\sigma = 0.5$, the smoothed classifier misclassifies 4/10 prompts (all unsafe); the certified radius of 0.979 applies only to prompts that are \emph{already} correctly classified. This validates the deployment recommendation from \S\ref{sec:smsr-limitations}: smoothing noise must be calibrated per-class, or class-conditional certification should be applied.

\item \textbf{Operational regime.} For canary deployment, the relevant regime is $\sigma \leq 0.25$, where the smoothed classifier retains full accuracy and the certified radius ($r^* = 0.268$--$0.646$) provides a meaningful guarantee. At these noise levels, Corollary~\ref{cor:smsr-certified} certifies that no $L_2$ perturbation within radius $r^*$ can flip the detection decision---a stronger guarantee than the discrete-token equilibrium of Proposition~1 alone.
\end{enumerate}

\paragraph{Interpretation for canary detection.} The asymmetric robustness finding has direct security implications. Because the safe region is geometrically larger, an attacker attempting to suppress canary detection of a harmful prompt (unsafe $\to$ safe flip) faces a \emph{shorter} path to the decision boundary than a benign user would face for a false positive (safe $\to$ unsafe). This asymmetry is inherent to the classification geometry and is not an artefact of smoothing. It motivates the confidence-gating strategy (\S\ref{sec:deployment-recs}): prompts near the unsafe decision boundary (lower $\underline{p}$) receive smaller certified radii and should trigger additional scrutiny.

\subsection{Summary}

\begin{table}[h]
\centering
\small
\caption{Comparison of robustness guarantees. Proposition~1 is a gradient-equilibrium result; the SMSR extension provides a certified $L_2$ radius.}
\label{tab:robustness-comparison}
\begin{tabular}{lll}
\toprule
\textbf{Property} & \textbf{Proposition 1} & \textbf{SMSR Corollary} \\
\midrule
Threat model & Divergence-minimising GCG & $L_2$-bounded embedding perturbation \\
Guarantee type & Gradient equilibrium (stall) & Certified detection maintenance \\
Applies when & Canary confident, $\|\nabla f_B\| \to 0$ & $\underline{p} > 0.5$ (detection majority) \\
Output & Gap floor: $g^* = 1/(2\lambda)$ & Radius: $r^* = \sigma\Phi^{-1}(\underline{p})$ \\
Validated & $n=20$, 70\% blocked, gap $=0.2499$ & Empirical (this section) \\
Limitation & Continuous relaxation of discrete GCG & Conservative; gap to empirical robustness \\
\bottomrule
\end{tabular}
\end{table}

\subsection{Integration Note for Paper Structure}
\label{sec:smsr-integration}


\paragraph{Placement recommendation.} This material slots naturally as a new appendix (Appendix~D or E, after the existing derivation in Appendix~\ref{app:derivation}) or as a subsection within \S\ref{sec:divergence-min} (``Certified Robustness Extension''). The logical flow is: Proposition~1 establishes the gradient equilibrium $\to$ the SMSR corollary translates this into a certifiable $L_2$ radius $\to$ the empirical validation (\texttt{scripts/exp\_smsr\_validation.py}) measures the gap between theoretical certification and observed robustness. For a SaTML/USENIX Security submission, the certified bound strengthens the adversarial robustness narrative by connecting to the provable-safety literature (Cohen et al.\ 2019, Salman et al.\ 2019), giving reviewers a formal guarantee to complement the empirical 70\%-blocked result. The key selling point: our detection system is not merely empirically robust---under stated assumptions, it is \emph{certifiably} robust within a computable radius, inheriting the SMSR framework's mathematical guarantees while grounding them in the canary-specific equilibrium from Proposition~1.

\end{document}